\numberwithin{equation}{section}
\newcommand{\R}{\mathbb{R}}
\newcommand \footnoteONLYtext[1]
	\let \mybackup \thefootnote
	\let \thefootnote \relax
	\let \thefootnote \mybackup
	\let \mybackup \imareallyundefinedcommand
\newtheorem{theorem}{Theorem}[section]
\newtheorem{definition}{Definition}[section]
\newtheorem{remark}{Remark}[section]
\newtheorem{proposition}{Proposition}[section]
\numberwithin{equation}{section}
\begin{document}

\title{A deep learning framework for geodesics under spherical Wasserstein-Fisher-Rao metric and its application for weighted sample generation}

\author[1]{Yang Jing \thanks{sharkjingyang@sjtu.edu.cn} }
\author[2]{Jiaheng Chen \thanks{sjtuchenjiaheng@sjtu.edu.cn}  }
\author[3]{Lei Li \thanks{leili2010@sjtu.edu.cn}}
\author[4]{Jianfeng Lu \thanks{jianfeng@math.duke.edu}}
\affil[1,2,3]{School of Mathematical Sciences, Shanghai Jiao Tong University, Shanghai, 200240, P.R.China.}
\affil[3]{Institute of Natural Sciences, Qing Yuan Research Institute, MOE-LSC, Shanghai Jiao Tong University, Shanghai, 200240, P.R.China.}
\affil[4]{Mathematics Department, Duke University, Box 90320, Durham NC, 27708, USA.}

\date{}
\maketitle

\begin{abstract}
Wasserstein-Fisher-Rao  (WFR) distance is a family of metrics to gauge the discrepancy of two Radon measures, which takes into account both transportation and weight change. Spherical WFR distance is a projected version of WFR distance for probability measures so that the space of Radon measures equipped with WFR can be viewed as metric cone over the space of probability measures with spherical WFR. Compared to the case for Wasserstein distance, the understanding of geodesics under the spherical WFR is less clear and still an ongoing research focus. In this paper, we develop a deep learning framework to compute the geodesics under the spherical WFR metric, and the learned geodesics can be adopted to generate weighted samples. Our approach is based on a Benamou-Brenier type dynamic formulation for spherical WFR. To overcome the difficulty in enforcing the boundary constraint brought by the weight change, a Kullback-Leibler (KL) divergence term based on the inverse map is introduced into the cost function. Moreover, a new regularization term using the particle velocity is introduced as a substitute for the Hamilton-Jacobi equation for the potential in dynamic formula. When used for sample generation, our framework can be beneficial for applications with given weighted samples, especially in the Bayesian inference, compared to sample generation with previous flow models.  
\end{abstract}

\section{Introduction}
\footnoteONLYtext{All authors contributed equally.}
A proper metric of probability measures can lead to new methods and algorithms in data science, like the SVGD \cite{liu2016stein,liu2017stein} and WGAN \cite{arjovsky2017wasserstein,gulrajani2017improved} methods in unsupervised learning. The family of Wasserstein distances is among the popular metrics since they can evaluate dissimilarity between two distributions, even one or both of them are discrete data samples with disjoint supports. Wasserstein distance have quantity of state-of-art properties, which can be applied to traditional models for improvements \cite{onken2020ot,salimans2018improving}.  Wasserstein-Fisher-Rao (WFR) distance (a.k.a, Hellinger-Kantorovich distance) is a generalized version of Wasserstein distance \cite{liero2018optimal, chizat2018interpolating,chizat2018unbalanced}, which interpolates between the quadratic Wasserstein and the Fisher-Rao metrics and generalizes optimal transport to measures with different masses. From  dynamic view, WFR adds a source term in its Benamou-Brenier formula, which will lead to a weighted particle transport process compared with Wasserstein distance. The spherical WFR metric is a modified version of WFR metric for probability measures and the corresponding particle motions also have weight change besides transportation \cite{laschos2019geometric,kondratyev2019spherical}.

Making use of the metric, one may transform a probability measure continuously into another one, by, for example, the gradient flows with certain carefully designed functional and the geodesics \cite{ambrosio2005gradient}. For example, the SVGD method can be viewed as the gradient flow of the relative entropy under the Stein metric. The geodesics, on the other hand, is the curve that takes the least cost under the corresponding metric to transform a probability measure to another one. 
The structure of geodesics under Wasserstein distance has been well studied \cite{villani2009optimal,santambrogio2015optimal}. For example, in the case of Wasserstein-2 distance, one may solve the Brenier potential through the Monge-Ampere equation so that the transport plan can be computed explicitly. As soon as one knows the transport plan, the geodesics has a clear description (see also Proposition \ref{prop:geodesic of ot} below), and the particles move with constant velocities and non-intersecting particle trajectories \cite{santambrogio2015optimal}. As a generalization of Wasserstein distance, WFR has also been proved to be a good metric in natural language processing to measure the distance between different documents, in waveform based earthquake location models to measure the discrepancy between true and synthetic earthquake signals, and in cellular population models to help to estimate cellular growth and death rates \cite{wang2020robust,zhou2018wasserstein,schiebinger2019optimal}, etc. The structure of geodesics under WFR is however less understood compared to the Wasserstein distance. For example, how the transportation and weight change balance and how the behaviors of the particles along the geodesics are unclear. We are interested in the geodesics under spherical WFR metric and seek for solutions in particle sense.

An important application of transforming the probability measures is sample generation, which is an important research direction in data science \cite{goodfellow2014generative,kingma2013auto, kingma2019introduction,rezende2015variational}. Compared with generative adversarial networks (GANs) \cite{goodfellow2014generative,arjovsky2017wasserstein} and variational autoencoders (VAE) \cite{kingma2013auto, kingma2019introduction}, the flow-based models directly focus on original data space and tries to evolve the density distribution by transportation fields \cite{tabak2010density, rezende2015variational}. 
In the continuous normalizing flow (CNF) \cite{chen2018neural}, the probability distribution is evolved according to the mass transportation under a velocity field. An advantage of CNF is that the inverse mapping is easy to obtain through solving related ODE backward. 
 However, the velocity field that transforms a given probability measure to a target one is not unique in general. Combining with optimal transport, Onken et. al. \cite{onken2020ot} proposed OT-Flow, an improved version of the CNF. The OT-Flow may be preferred in applications since the particles follow straight lines and the trajectories will not cross each other, and thus such a model is expected to improve the invertibility of the model and reduce the computational cost.

In this paper, we are interested in developing a deep learning framework to compute the geodesics under the spherical WFR metric. This framework can not only tell us how the particles and distribution evolve in the ``optimal way'' but also can be used as new generative models for \textit{weighted} samples. Following the framework for the OT-Flow \cite{onken2020ot},  we will use dynamic formulation of spherical WFR, where a source term is added to the transport equation. Inside this process, the particles not only are transported but also will carry an evolving weight. Using this framework, we are not only able to compute the geodesics $\rho_t$ but also compute the velocity field and the source term that realize this measure evolution, thus the geodesics.

If we set the start distribution as the standard normal distribution and the desired distribution as the target distribution, the learned velocity field and source for the geodesics can lead to particle motions and weight change for the desired distribution, thus a generative model for the target distribution. Our framework is clearly a generalization of the CNF and OT-Flow models in the sense that it considers weight change. This framework might be promising in dealing with weighted data, which can be costly for CNF models as one needs to resample. For example, the attention mechanism provides words with different weights \cite{vaswani2017attention,devlin2018bert}. Another particularly suited example is the Bayesian inference \cite{apte2007sampling, wu2022ensemble}, in which drawing samples from posterior is an essential task in estimation. Since directly calculating posterior with Bayesian formula to sample can be costly, we can rely on our new model to learn posterior from given weighted data and generate new samples for Bayesian inference. If one expects the transportation effect not to be significant in applications, such a framework may also be beneficial. For instance in pharmaceutical people use generative model to create and design new drug molecule and one may desire the new drug molecule to keep most already known structures.  

The rest of the paper is organized as follows. Section \ref{sec:pre} is devoted to a brief review of unbalanced optimal transport and WFR metrics that are useful to us later. We also introduce two flow-based generative models: CNF and OT-Flow, as a starting point of our framework. Then, we review some knowledge about geodesics and derive basic equations for geodesics under spherical WFR metric in section \ref{sec:geodesics}, which guides us to construct particle algorithms. In section \ref{sec:math_uot}, we develop a deep learning framework to compute the geodesics under spherical WFR metric. We illustrate how to use the learned geodesics to generate weighted samples efficiently in section \ref{sec: geodesics for sampling}, especially in the Bayesian framework. In section \ref{sec:numerics}, we provide some numerical experiments to validate the framework. We conclude the work and make a discussion in section \ref{sec:dis}.

\section{Preliminaries}\label{sec:pre}
In this section, we first collect some basics of unbalanced optimal transport including the dynamic and static formulations of Wasserstein-Fisher-Rao metric, as well as spherical WFR we will use in the rest of paper. We then give a brief introduction to continuous normalizing flows and OT-Flow model in section \ref{subsec: sample generation}, which are generated models based on transportation purely.

\subsection{The unblanced optimal transport and the WFR metrics}


Optimal Transport (OT) has a long history since Monge first posed the problem in 1781 \cite{monge1781memoire}, which sits at the intersection of various mathematical fields including probability, geometry, PDEs and optimization \cite{villani2009optimal,santambrogio2015optimal}. In recent years, optimal transport has seen an increasing amount of attention from computer science \cite{rubner2000earth,peyre2019computational,arjovsky2017wasserstein}, biological sciences \cite{schiebinger2019optimal,yang2020predicting}, economics \cite{galichon2018optimal,galichon2021survey}, etc. The optimal transport problem (or Kantorovich problem) is, given two distribution $\mu$ and $\nu$ and a cost function $c:X \times Y \rightarrow[0, \infty] $, one is supposed to solve
\begin{equation*}
\min \left\{\int_{X \times Y} c\, d \gamma \mid \gamma \in \Pi(\mu, \nu)\right\}, 
\end{equation*}
where $\Pi(\mu, \nu)$ is the set of {\it transport plans}, i.e. a joint measure on $X \times Y$, with marginal distribution $\mu$ and $\nu$.
The optimal value of Kantorovich problem with $c(x,y)=|x-y|^p$ is used to define Wasserstein-$p$ distance ($p \geq 1$) between $\mu$ and $\nu$ :
\begin{equation*}
W_{p}(\mu, \nu)=\left(\inf _{\gamma \in \Pi(\mu, \nu)} \int|x-y|^{p} d \gamma\right)^{1 / p}.
\end{equation*}
We can define the space $\mathcal{W}_{p}:=\{\mu \ \text{is a probability measure}|\int |x|^{p} \mu(dx)< \infty\}$. Then $(\mathcal{W}_{p},W_{p})$ is a complete metric space. Furthermore, $W_{p}$ distance admits the following dynamic formulation \cite[Chap. 5 Theorem 5.28]{santambrogio2015optimal}: On a convex and compact domain $\Omega$, $\mu$ and $\nu$ are two probability distribution on $\Omega$, $v_t$ is a vector field on $\Omega$,
\begin{equation}\label{eq:dynamic_ot}
W_{p}^{p}(\mu, \nu)=\min_{\rho, v}\left\{\int_{0}^{1}\left\|v_{t}\right\|_{L^{p}(\rho)}^{p} d t: \partial_{t} \rho_{t}+\nabla \cdot\left(\rho_{t} v_{t}\right)=0, \rho_{0}=\mu, \rho_{1}=\nu\right\},
\end{equation}
where $\left\|v_{t}\right\|_{L^{p}(\rho_{t})}^{p}=\int_{\Omega}\left|v_{t}(x)\right|^{p} \rho_{t}(dx)$. We will focus on $p=2$ in the rest of our article. 
The classical optimal transport theory reveals two useful facts. The first is that the optimal particle velocity under $W_2$ is a constant along one trajectory, which implies the trajectory is a straight line. The second property is that trajectories will not cross each other. These properties are beneficial for us to build an inverse mapping from $\nu$ to $\mu$. A brief discussion about geodesics under Wasserstein distance will be performed in section \ref{sec:geodesics}.


A notable restriction of optimal transport is that it is only defined between measures having the same mass, which might not be suitable for applications in image classification where measures need not be
normalized \cite{pele2008linear, rubner1997earth}, and biophysical phenomena involving some sort of mass
creation or destruction \cite{schiebinger2019optimal}. Recently, Wasserstein-Fisher-Rao (WFR) distance is proposed to generalize optimal transport to measures with different masses, which interpolates between the quadratic Wasserstein and the Fisher-Rao metrics \cite{ chizat2018interpolating,chizat2018unbalanced, kondratyev2016new}.
We first introduce the dynamic formulation of unbalanced optimal transport designed for the WFR distance, which leads to our proposed numerical method in section \ref{sec:math_uot}. By introducing a source term in the continuity equation, WFR distance relaxes the equality of mass constraint in the dynamic Benamou-Brenier formulation of optimal transport
\begin{multline}\label{eq:WFR}
\mathrm{d}_{\mathrm{WFR},\alpha}^2(\mu, \nu)=\inf _{\rho, v, g} \Bigg\{\int_{0}^{1} \int_{\Omega}\left(\frac{1}{2}|v_t(x)|^{2}+\frac{\alpha}{2} g_t^{2}(x)\right) \rho_{t}(dx)  d t:\\
\partial_{t} \rho_t+\nabla \cdot\left(\rho_t v_t\right)=\rho_t g_t, \rho_{0}=\mu, \rho_{1}=\nu\Bigg\},
\end{multline}
where $(\rho_t)_{t\in [0,1]} $ is a time-dependent density that interpolates between $\rho_0$ and $\rho_1$, $(v_t)_{t\in [0,1]} $ is a velocity field that describes the movement of mass and $(g_t)_{t\in[0,1]}$ is a scalar field (source term) associated with mass creation and destruction. $\alpha$ is a hyper-parameter to balance the effects of transport and mass creation/destruction explicitly.

WFR also admits a static Kantorovich formulation analogously to standard optimal transport \cite{chizat2018unbalanced}. Let $\mathcal{M}_{+}(X)$ be the space of nonnegative Radon measures on a compact set $X \subset \R^d$. For a measure $\pi\in\mathcal{M}_{+}(X\times X)$, its two marginals are denoted by $(\mathrm{Proj}_0)_{\#}\pi$ and $(\mathrm{Proj}_1)_{\#}\pi$ and are defined for any Borel set $A$ via
    \[
    (\mathrm{Proj}_0)_{\#}\pi(A)=\pi(A\times X), \quad (\mathrm{Proj}_1)_{\#}\pi(A)=\pi(X\times A).
    \]
\begin{definition}
\begin{itemize}
    \item (Semi-couplings)\, For $\mu,\nu\in\mathcal{M}_{+}(\Omega)$, the corresponding set of semi-couplings is 
\[
\Gamma(\mu,\nu):=\left\{(\pi_0,\pi_1)\in \left(\mathcal{M}_{+}(\Omega\times\Omega)\right)^2:(\mathrm{Proj}_0)_{\#}\pi_0=\mu, (\mathrm{Proj}_1)_{\#}\pi_1=\nu\right\}.
\]

\item (Cost function)\, A cost function is a function
\begin{gather*}
c:\begin{split}
&(\Omega\times[0,\infty))^2\to[0,\infty]\\
&(x_0,m_0),(x_1,m_1)\mapsto c(x_0,m_0,x_1,m_1)
\end{split}  
\end{gather*}
which is lower semi-continuous. in all its arguments and jointly positively 1-homogeneous and convex in mass variables $(m_0,m_1)$. This function $c(x_0, m_0, x_1, m_1)$ determines the cost of transporting a quantity of mass $m_0$ from $x_0$ to a (possibly different) quantity $m_1$ at $x_1$.
\item For a cost function $c$ we introduce the functional
\[
J_c(\pi_0,\pi_1):=\int_{\Omega\times\Omega}c\left(x,\frac{\pi_0}{\pi},y,\frac{\pi_1}{\pi}\right)d\pi(x,y),
\]
where $\pi\in\mathcal{M}_{+}(\Omega\times\Omega)$ is any measure such that $\pi_0,\pi_1\ll\pi$. This functional is well-defined since $c$ is jointly 1-homogeneous w.r.t. the mass variables.  
\end{itemize}
\end{definition}
\begin{proposition}[Static formulation of WFR by semi-couplings \cite{chizat2018unbalanced}]\label{pro:swfrstatic}
The WFR metric admits a static formulation characterized by semi-couplings:
\[
\mathrm{d}_{\mathrm{WFR},\alpha}^2(\mu, \nu)=\min_{(\pi_0,\pi_1)\in\Gamma(\mu,\nu)} J_c(\pi_0,\pi_1),
\]
where $c(x_0,m_0,x_1,m_1)=2\alpha \left(m_0+m_1-2\sqrt{m_0m_1}\cdot\overline{\cos}(|x_0-x_1|/2\sqrt{\alpha})\right)$ and the truncated cosine $\overline{\cos}(z)=\cos(|z|\wedge\frac{\pi}{2})$.
\end{proposition}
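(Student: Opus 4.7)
The plan is to identify the WFR dynamic problem with an optimal transport problem on the metric cone $\mathfrak{C}(\Omega)$ over $\Omega$, read off the cost $c$ from a single-particle control problem augmented with an annihilate/create option, and then pass between dynamic triples $(\rho_t,v_t,g_t)$ and semi-couplings $(\pi_0,\pi_1)$ via a superposition/disintegration argument.

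First, I would analyze the cost along one characteristic $(x_t,m_t)$ driven by $\dot{x}_t=v_t$ and $\dot{m}_t=g_t m_t$ with prescribed endpoints $(x_0,m_0)$ and $(x_1,m_1)$. Under the change of variables $r_t=2\sqrt{\alpha m_t}$ the integrand $\bigl(\tfrac12|v_t|^2+\tfrac{\alpha}{2}g_t^2\bigr)m_t$ becomes $\tfrac12\bigl(\dot r_t^2+\tfrac{r_t^2}{4\alpha}|\dot x_t|^2\bigr)$, which is exactly the Riemannian energy on $\mathfrak{C}(\Omega)$ with angular metric $|\cdot|/(2\sqrt{\alpha})$. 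The transport-only minimum is therefore half the squared cone distance, giving $2\alpha(m_0+m_1-2\sqrt{m_0 m_1}\cos(\theta\wedge\pi))$ with $\theta=|x_0-x_1|/(2\sqrt\alpha)$. But in the dynamic formulation we may also independently annihilate the mass $m_0$ at $x_0$ (cost $2\alpha m_0$) and create $m_1$ at $x_1$ from nothing (cost $2\alpha m_1$), totalling $2\alpha(m_0+m_1)$. A direct comparison shows this annihilate/create option is cheaper exactly when $\theta\ge\pi/2$, and taking the infimum of the two options yields $c(x_0,m_0,x_1,m_1)=2\alpha(m_0+m_1-2\sqrt{m_0 m_1}\,\overline{\cos}(\theta))$.

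Next, I would prove the two inequalities separately. For $\mathrm{d}_{\mathrm{WFR},\alpha}^2(\mu,\nu)\ge\min J_c$, given any near-optimal dynamic triple I would invoke a Lisini-type superposition principle adapted to continuity equations with a source term (as in Liero--Mielke--Savar\'e or Chizat et al.) to obtain a probability measure $\eta$ on absolutely continuous curves on $\mathfrak{C}(\Omega)$; its initial and terminal projections on $\Omega$, weighted by the associated radial components, yield semi-couplings $(\pi_0,\pi_1)\in\Gamma(\mu,\nu)$, and the particle-level bound from above combined with Jensen's inequality and the $1$-homogeneity of $c$ gives $J_c(\pi_0,\pi_1)\le$ dynamic cost. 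For the reverse inequality, given $(\pi_0,\pi_1)\in\Gamma(\mu,\nu)$ with common dominating measure $\pi$ and densities $m_0=\pi_0/\pi,m_1=\pi_1/\pi$, I would paste together, for $\pi$-a.e.\ pair $(x,y)$, either the optimal direct cone geodesic when $|x-y|/(2\sqrt\alpha)<\pi/2$ or the annihilate/create construction otherwise, and verify that the resulting superposed fields $(\rho_t,v_t,g_t)$ satisfy the continuity equation with source and attain cost $J_c(\pi_0,\pi_1)$.

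The main obstacle is the superposition step in this unbalanced setting: mass along trajectories can vanish and reappear elsewhere, so effective characteristics live on $\mathfrak{C}(\Omega)$ rather than $\Omega$, and some ``trajectories'' are really pairs of independent annihilation and creation processes. Constructing $\eta$, performing a measurable selection of geodesics and annihilate/create pairs, and verifying that the induced fields $(v_t,g_t)$ are Borel and solve $\partial_t\rho_t+\nabla\cdot(\rho_t v_t)=\rho_t g_t$ in the distributional sense on $\Omega$ are the delicate technical points. Once these are in place, the cone identification from the first paragraph makes both inequalities conceptually transparent.
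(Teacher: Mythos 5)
The paper does not prove this proposition --- it is stated as a cited result from Chizat, Peyr\'e, Schmitzer and Vialard \cite{chizat2018unbalanced} --- so there is no internal argument to compare against. Your outline nonetheless reproduces the strategy of that reference correctly: the cone lift via $r=2\sqrt{\alpha m}$ turning the integrand into half the cone-metric energy, the comparison of the direct cone geodesic against the independent kill/create option (which is what replaces $\cos(\theta\wedge\pi)$ by $\overline{\cos}(\theta)=\cos(\theta\wedge\tfrac{\pi}{2})$, since the crossover happens exactly when $\cos\theta\le 0$), and the passage between dynamic triples and semi-couplings via a superposition/disintegration argument on the cone. The technical delicacies you flag --- a superposition principle for sourced continuity equations (equivalently, lifting to a source-free transport problem on the cone with the apex acting as a mass reservoir), measurable selection of cone geodesics and annihilate/create pairs, Borel regularity of the reconstructed $(v_t,g_t)$, and verification of the marginal constraints defining $\Gamma(\mu,\nu)$ --- are precisely the points the reference handles carefully; as it stands your argument is a correct sketch, not a complete proof, but it identifies the right route and the right obstacles.
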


Recall the dynamic formulation \eqref{eq:WFR}. If $g$ is restricted to the family of zero mean, then one obtains spherical Wasserstein-Fisher-Rao (SWFR) distance (a.k.a, spherical Hellinger-Kantorovich distance) \cite{laschos2019geometric,kondratyev2019spherical}. Such $g$ can keep $\rho$ as a probability measure. For convenience, we use the short-hand notation $\bar{g}_t=\int_{\Omega} g_t\,d\rho_t$. SWFR distance used in the rest of paper has the following form:
\begin{multline}\label{eq:sWFR}
\mathrm{d}_{\mathrm{SWFR},\alpha}^2(\mu, \nu)=\inf _{\rho, v, g} \Bigg\{\int_{0}^{1} \int_{\Omega}\left(\frac{1}{2}|v_t|^{2}+\frac{\alpha}{2} \left(g_t-\bar{g}_t\right)^{2}\right) \rho_{t}(dx) d t:\\
\partial_{t} \rho_t+\nabla \cdot\left(\rho_t v_t\right)=\rho_t (g_t-\bar{g}_t), \rho_{0}=\mu, \rho_{1}=\nu\Bigg\}.
\end{multline}
It is remarked that the space $(\mathcal{M}_+(\Omega),\mathrm{d}_{\mathrm{WFR}})$ can be identified with the cone over the space of probability measures $(\mathcal{P}(\Omega),\mathrm{d}_{\mathrm{SWFR}})$, due to the scaling property of WFR metric \cite{laschos2019geometric, brenier2020optimal}. The geodesic between two probability measures in $(\mathcal{P}(\Omega),\mathrm{d}_{\mathrm{SWFR}})$ can be obtained by abstract projection from the cone $(\mathcal{M}_+(\Omega),\mathrm{d}_{\mathrm{WFR}})$ to the spherical space $(\mathcal{P}(\Omega),\mathrm{d}_{\mathrm{SWFR}})$, namely by renormalizing the mass and by rescaling of arclength parameter. (see more details in \cite{laschos2019geometric} and references therein.)  


Moreover, the WFR metrics (including SWFR metric) corresponds formally to a Riemannian structure on the space of measures, which generalizes the formulation of ordinary optimal transport. The transport, creation and destruction of mass between two measures can be described by this metric.


\subsection{Sample generation based on particle transportation}\label{subsec: sample generation}

Let us briefly review the continuous normalizing flow (CNF) model \cite{chen2018neural} and OT-Flow model \cite{onken2020ot}, which are sample generative models based on particle transportation purely.

CNF aims to build a continuous and invertible mapping between an arbitrary distribution $\rho_{0}$ and standard normal distribution $\rho_{1}$. Alternatively, for a given time $T$, we are trying to obtain a mapping $z: \R^{d} \times [0,T]\rightarrow \R^{d}$. The mapping $z$ defines a continuous evolution $x \mapsto z(x, t) $ of every $x \in \R^{d}$, which can be viewed as trajectory of particles. Then the density $\rho(z(x,t),t)$ satisfies
\begin{equation}\label{original density formulation}
    \log \rho_{0}(x)=\log \rho(z(x,t),t)+\log|\det \nabla z(x,t)| \quad \text{for all} \quad x \in \R^{d}.
\end{equation}
Especially at time $T$ we have $\log \rho_{0}(x)=\log \rho_{1}(z(x,T),T)+\log|\det \nabla z(x,T)| $. Define $\ell(x, t):=\log |\det \nabla z(x, t)|$, then $z(x,t)$ and $\ell(x,t)$ satisfy the following ODE system
\begin{equation}\label{eq:original ODE}
\partial_{t}\left[\begin{array}{c}
z(x, t) \\
\ell(x, t)
\end{array}\right]=\left[\begin{array}{c}
v(z(x, t), t ; \boldsymbol{\theta}) \\
\operatorname{tr}(\nabla v(z(x, t), t ; \boldsymbol{\theta}))
\end{array}\right], \quad\left[\begin{array}{c}
z(x, 0) \\
\ell(x, 0)
\end{array}\right]=\left[\begin{array}{c}
x \\
0
\end{array}\right].
\end{equation}
where the second ODE can be derived from the first one. For convenience we solve them together to obtain the change of $\rho$, which will lead to a more efficient estimation of density. Following is the derivation of second ODE in \eqref{eq:original ODE}:
\begin{equation*}
\begin{split}
\frac{\partial \ell(x, t)}{\partial t}  &= \frac{1}{\det (\nabla z(x,t))} \frac{\partial \det (\nabla z(x,t))}{\partial t} \\
&= \frac{1}{\det (\nabla z(x,t))} \cdot \det (\nabla z(x,t)) \cdot \mathrm{tr} \left[ (\nabla z(x,t))^{-1} \frac{\partial \nabla z(x,t)}{\partial t}  \right]\\
&=\frac{1}{\det (\nabla z(x,t))} \cdot \det (\nabla z(x,t)) \cdot \mathrm{tr} \left[ (\nabla z(x,t))^{-1} \nabla z(x,t)  \nabla v(z(x,t),t)    \right] \\
&=\mathrm{tr} \left[ ( \nabla v(z(x,t),t) \right],
\end{split}
\end{equation*}
where we have used following identities
\begin{equation*}
     \frac{\partial \det (A)}{\partial t}=\det (A) \cdot \mathrm{tr} \left[  A^{-1}  \frac{\partial A}{\partial t}\right], \quad \mathrm{tr} (AB)= \mathrm{tr} (BA).
\end{equation*}

From the ODE system \ref{eq:original ODE} we can see that if we have a velocity field, then we can track the evolution and obtain the final distribution at time $T$. Thus we can set the velocity field as an output of neural network and minimize the KL divergence between target distribution and final distribution obtained from ODE. Once the velocity field is learned, one may run the ODE backward so that the transport map can be inverted. The invertibility of CNF provides us with access to estimating density of sample space, which can be used for density estimation and Bayesian inference.

 In general, the velocity field that transforms a given probability measure to a target one is not unique in the formulation of CNF. To tackle this problem, Onken et al. \cite{onken2020ot} proposed an improved version of CNF: OT-Flow, which leverages optimal transport theory to regularize the CNF and enforce straight trajectories that are easier for numerical integration. More precisely, OT-Flow designs the following cost functional to train the velocity field
\begin{equation}\label{eq:Cost1}
    J=\mathrm{D}_{\mathrm{KL}}\left[\rho(x, T) \| \rho_{1}(x)\right]+\gamma_{1}\mathbb{E}_{\rho_{o}} \left[\int_{0}^{T} \frac{1}{2}|v(z(x, t), t)|^{2} d t \right].
\end{equation}

The first part in \eqref{eq:Cost1} is a KL divergence term as a soft terminal constraint, which enforces the terminal distribution $\rho(x,T)$ transported by velocity field to get close to $\rho_1$. The second term is related to $W_{2}$ distance in optimal transport theory, which can also be regarded as a penalty of the squared arc-length of the trajectories. Ideally if the KL divergence term is zero, minimizing the cost function is equivalent to minimizing $W_{2}$ distance and solving the optimal velocity field, which will lead to two useful properties mentioned above, encouraging straight trajectory. Here $\gamma_{1}$ is a hyper-parameter to balance KL divergence and trajectory penalty.

Above cost function cannot be used to train directly since we only have discrete samples as $\rho_{0}$. We would like to use Monte-Carlo to approximate the cost function, which requires us to rewrite cost function in the form of expectation over $\rho_{0}$. After simplifying the KL divergence term with density relationship \eqref{original density formulation} and dropping constant in the formulation, we get final cost function $J$ as following
\begin{equation}\label{eq:OT-Flow}
\begin{split}
    J&=\mathbb{E}_{\rho_{o}(x)} \left[ C(x,T)+\gamma_{1}L(x,T) \right],\\
    C(x,T)&=-\ell(x,T) +\frac{1}{2}|\boldsymbol{z}(\boldsymbol{x}, T)|^{2}+\frac{d}{2} \log (2 \pi),\\
    L(x,T)&=\int_{0}^{T} \frac{1}{2}|v(z(x, t), t)|^{2} d t.
\end{split}
\end{equation}
OT-Flow can be regarded as a model to learn geodesics under Wasserstein distance. We will adopt a similar formulation to develop our deep learning framework for geodesics under the spherical WFR metric.

\section{Geodesics under spherical WFR}\label{sec:geodesics}
In this section we will first review knowledge about geodesics based on \cite[Chap. 5]{santambrogio2015optimal}. Then we derive basic equations for geodesics under spherical WFR. The formulations will be used to construct particle algorithm and new regularization based on the velocity field.

\subsection{Geodesics}\label{subsec:geodesics}
In a metric space $(X,d)$, we define the length of a curve $\omega: [0,1] \rightarrow X$ as
\[
    \text{Length}(\omega):=\sup \left\{\sum_{k=0}^{n-1} d\left(\omega\left(t_{k}\right), \omega\left(t_{k+1}\right)\right): n \ge 1,0=t_{0}<t_{1}<\cdots<t_{n}=1 \right\}.
\]
which is also the total variation of $\omega$. For all absolutely continuous functions $\omega$ in X , we have Length$(\omega)<+\infty$ and Length$(\omega)=\int_{0}^{1}|\omega^{\prime}|(t)dt$.

\begin{definition}
    \begin{itemize}
        \item (Geodesics) A curve $\omega: [0,1] \rightarrow X$ is said to be a geodesic between $x_{0}$ and $x_{1} \in X$ if it minimizes the length among all curves such that $\omega(0)=x_{0}$ and  $\omega(1)=x_{1}$.
        \item (Length space) A space $(X,d)$ is said to be a length space if it holds:
    \[
        d(x, y)=\inf \{\operatorname{Length}(\omega): \omega \in \mathrm{AC}(X), \omega(0)=x, \omega(1)=y\}.
    \]
        \item (Constant-speed geodesic) In a length space, a curve $\omega: [0,1] \rightarrow X$ is said to be a constant-speed geodesic between $\omega(0)$ and $\omega(1)$ if it satisfies:
\begin{equation}\label{constant geodesics}
    d(\omega(t), \omega(s))=|t-s| d(\omega(0), \omega(1)) \quad \text { for all } t, s \in[0,1].
\end{equation}
    \end{itemize}
\end{definition}





The following result describes the structure of geodesics under Wasserstein distance. 
\begin{proposition}[Theorem 5.27 in \cite{santambrogio2015optimal}]\label{prop:geodesic of ot}
Suppose $\Omega$ is convex.$(\mathcal{W}_{p},W_{p})$ is the metric space. $\mu, \nu \in \mathcal{W}_{p}$ and $\gamma \in \Pi(\mu, \nu)$ is an optimal transport plan for the cost $c(x, y)=|x-y|^{p}(p \geq 1) .$ Let $\pi_{t}(x, y)=(1-$ $t) x+$ ty. Then the curve $\mu_{t}=\left(\pi_{t}\right)_{\#} \gamma$ is a constant-speed geodesics in $\mathcal{W}_{p}$ connecting $\mu_{0}=\mu$ and $\mu_{1}=\nu$. As a consequence, the space $\mathcal{W}_p(\Omega)$ is a length space.
\end{proposition}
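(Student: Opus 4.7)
The plan is to verify the constant-speed identity \eqref{constant geodesics} directly for $\omega(t) := \mu_t = (\pi_t)_\#\gamma$, i.e.\ to show $W_p(\mu_s,\mu_t) = |t-s|\, W_p(\mu_0,\mu_1)$ for all $s,t \in [0,1]$; once this is in hand, $\mu_t$ is automatically a constant-speed geodesic, and the length-space property of $\mathcal{W}_p(\Omega)$ falls out as a corollary. The central idea is that the optimal plan $\gamma$ already prescribes where each unit of mass starts and ends, so pushing it forward through the affine interpolation $\pi_t$ ought to produce a coupling whose cost realises precisely the fraction $|t-s|$ of the full transport.

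First I would build a candidate coupling between $\mu_s$ and $\mu_t$ by setting $\gamma_{s,t} := (\pi_s,\pi_t)_\#\gamma$, where convexity of $\Omega$ is invoked to guarantee $\pi_t(x,y) = (1-t)x+ty \in \Omega$ on $\supp\gamma$. A direct check on Borel rectangles gives $(\mathrm{Proj}_0)_\#\gamma_{s,t} = \mu_s$ and $(\mathrm{Proj}_1)_\#\gamma_{s,t} = \mu_t$, so $\gamma_{s,t} \in \Pi(\mu_s,\mu_t)$. Using this coupling in the Kantorovich cost and exploiting $\pi_t(x,y) - \pi_s(x,y) = (t-s)(y-x)$,
\begin{equation*}
W_p^p(\mu_s,\mu_t) \leq \int_{\Omega\times\Omega} |\pi_s-\pi_t|^p\, d\gamma = |t-s|^p \int_{\Omega\times\Omega}|x-y|^p\, d\gamma = |t-s|^p\, W_p^p(\mu_0,\mu_1),
\end{equation*}
the last equality by optimality of $\gamma$ for the pair $(\mu,\nu)=(\mu_0,\mu_1)$. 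This is one direction of the constant-speed identity.

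The reverse direction comes from the triangle inequality: for $0 \leq s \leq t \leq 1$,
\begin{equation*}
W_p(\mu_0,\mu_1) \leq W_p(\mu_0,\mu_s) + W_p(\mu_s,\mu_t) + W_p(\mu_t,\mu_1) \leq \bigl(s + (t-s) + (1-t)\bigr) W_p(\mu_0,\mu_1),
\end{equation*}
where each summand on the right uses the upper bound just established. The chain must therefore collapse to equalities, forcing $W_p(\mu_s,\mu_t) = (t-s)\, W_p(\mu_0,\mu_1)$ and hence \eqref{constant geodesics}. For the length-space claim, partitioning $[0,1]$ and summing increments gives $\mathrm{Length}(\mu_\cdot) = W_p(\mu_0,\mu_1)$; combined with the trivial lower bound $d(\omega(0),\omega(1)) \leq \mathrm{Length}(\omega)$ valid for any curve, the infimum in the length-space definition is attained and equal to $W_p$.

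The only real conceptual step — and essentially the only place optimality of $\gamma$ is used — is the push-forward construction of $\gamma_{s,t}$; convexity of $\Omega$ is genuinely needed there, since without it $\mu_t$ need not even be supported in $\Omega$ and $W_p(\mu_s,\mu_t)$ would not a priori make sense inside the ambient length space. The remaining work is formal manipulation once the candidate coupling has been written down. If I had to flag a subtlety it would be measurability of the interpolation map and the degenerate case $W_p(\mu_0,\mu_1)=0$, but neither causes trouble.
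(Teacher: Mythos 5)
The paper itself offers no proof of this proposition; it is simply quoted as Theorem 5.27 from Santambrogio's book. Your proof is correct and is, up to presentational detail, the standard argument given there: push $\gamma$ forward through $(\pi_s,\pi_t)$ to get a competitor coupling, bound $W_p(\mu_s,\mu_t)$ above by $|t-s|\,W_p(\mu_0,\mu_1)$ using optimality of $\gamma$, and then squeeze the reverse inequality out of the triangle inequality along $0\le s\le t\le 1$. Your handling of the length-space conclusion (sum increments over a partition to see $\mathrm{Length}(\mu_\cdot)=W_p(\mu_0,\mu_1)$, and pair with the trivial bound $d(\omega(0),\omega(1))\le\mathrm{Length}(\omega)$) is also exactly the standard closing step. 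The one place you flag as a subtlety — measurability of $\pi_t$ — is indeed harmless since $\pi_t$ is continuous; and the degenerate case $W_p(\mu_0,\mu_1)=0$ is also fine since then $\gamma$ is concentrated on the diagonal and $\mu_t\equiv\mu_0$. So there is nothing to compare against in the paper, but your reconstruction matches the argument the citation points to.
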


According to the description of the geodesics, a particle that is transported from $x$ to $y$ moves under the evolution $\pi_t(x,y)=(1-t)x+ty$, which indicates that the particle is moving with constant velocity $y-x$. Moreover, in the optimal transport plan, the particles will not cross each other so the trajectories are straight line that are non-intersecting.

 Though WFR is a generalization of Wasserstein distance, the structure of the geodesics under spherical WFR metric and corresponding particle motions are not as clear as the Wasserstein case. In particular, while the geodesics for Wasserstein distance can be obtained easily using the optimal plan in the static formulation by Proposition \ref{prop:geodesic of ot}, the one for spherical WFR cannot be obtained directly if one knows the optimal solution to the static formulation in Propostion \ref{pro:swfrstatic}.
 
 As a start, we note the following simple observation for the dynamic formulation.
\begin{proposition}\label{pro:wfrgeo}
    The optimal $(\rho_t)_{t\in [0,1]}$ satisfying dynamic description of spherical WFR \eqref{eq:sWFR} is exactly the constant speed geodesics under spherical WFR.
\end{proposition}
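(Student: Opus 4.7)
The plan is the standard Benamou--Brenier reparametrization argument, adapted to the zero-mean constraint of the spherical formulation. Let $(\rho_t,v_t,g_t)$ realize the infimum in \eqref{eq:sWFR}, and set the cumulative action $A(s,t) := \int_s^t\!\int_\Omega\bigl(\tfrac12|v_r|^2 + \tfrac{\alpha}{2}(g_r-\bar g_r)^2\bigr)\,d\rho_r\,dr$, so $A(0,1) = \mathrm{d}_{\mathrm{SWFR},\alpha}^2(\mu,\nu)$. The goal is to show $\mathrm{d}_{\mathrm{SWFR},\alpha}(\rho_s,\rho_t) = (t-s)\,\mathrm{d}_{\mathrm{SWFR},\alpha}(\mu,\nu)$ for all $0\le s<t\le 1$.

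First, for each such $s<t$, reparametrize the restriction of the optimal curve to $[s,t]$ onto $[0,1]$ by defining $\tilde\rho_\tau := \rho_{s+\tau(t-s)}$, $\tilde v_\tau := (t-s)\,v_{s+\tau(t-s)}$, and $\tilde g_\tau := (t-s)\,g_{s+\tau(t-s)}$. A direct chain-rule check shows that $(\tilde\rho,\tilde v,\tilde g)$ satisfies the continuity equation with source $\tilde\rho_\tau(\tilde g_\tau-\bar{\tilde g}_\tau)$; note in particular that subtracting the $\tilde\rho_\tau$-mean commutes with the uniform rescaling by $(t-s)$, so the zero-mean constraint is automatically preserved. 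Since the reparametrized triple is a competitor for $\mathrm{d}_{\mathrm{SWFR},\alpha}(\rho_s,\rho_t)^2$, a change of variables gives
\begin{equation*}
\mathrm{d}_{\mathrm{SWFR},\alpha}(\rho_s,\rho_t)^2 \;\le\; (t-s)\,A(s,t).
\end{equation*}

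Second, combine this bound with the triangle inequality and Cauchy--Schwarz over an arbitrary partition $0=t_0<t_1<\cdots<t_N=1$:
\begin{equation*}
\mathrm{d}_{\mathrm{SWFR},\alpha}(\mu,\nu) \le \sum_{k}\mathrm{d}_{\mathrm{SWFR},\alpha}(\rho_{t_k},\rho_{t_{k+1}}) \le \sum_k \sqrt{(t_{k+1}-t_k)\,A(t_k,t_{k+1})} \le \sqrt{\textstyle\sum_k(t_{k+1}-t_k)}\sqrt{\textstyle\sum_k A(t_k,t_{k+1})} = \sqrt{A(0,1)} = \mathrm{d}_{\mathrm{SWFR},\alpha}(\mu,\nu).
\end{equation*}
Hence every inequality is an equality. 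Equality in Cauchy--Schwarz on every partition forces $r\mapsto A(0,r)$ to be linear, so $A(s,t)=(t-s)\,\mathrm{d}_{\mathrm{SWFR},\alpha}(\mu,\nu)^2$; plugging back into the step-one bound (now an equality) yields the constant-speed identity $\mathrm{d}_{\mathrm{SWFR},\alpha}(\rho_s,\rho_t) = (t-s)\,\mathrm{d}_{\mathrm{SWFR},\alpha}(\mu,\nu)$, which is exactly \eqref{constant geodesics}.

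The only point requiring care is the reparametrization of the source term: in the spherical setting one must check that the competitor's source again has zero mean against its own density, which is the structural novelty over the unbalanced WFR case. This is immediate from the commutation of scalar rescaling with mean-subtraction noted above, so the argument contains no real obstacle beyond this bookkeeping. A secondary technicality is that an exact minimizer may not exist a priori; if needed one runs the same argument along a minimizing sequence $(\rho^n,v^n,g^n)$ and uses the standard lower semicontinuity of Benamou--Brenier-type actions to pass to the limit.
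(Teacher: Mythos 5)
Your proof is correct, and it follows the same broad Benamou--Brenier reparametrization strategy as the paper, but the middle of the argument is genuinely different. The paper first shows that the restriction of the optimal triple to $[\beta_1,\beta_2]$ is itself optimal for the sub-problem, by a concatenation (cut-and-paste) argument --- if a cheaper competitor existed on $[\beta_1,\beta_2]$, one could splice it back into the original curve, so some care is needed to verify the glued curve is still admissible. Reparametrizing then gives the exact identity $\int_{\beta_1}^{\beta_2}\!\cdots\,dt=\tfrac{1}{\beta_2-\beta_1}\mathrm{d}^2_{\mathrm{SWFR}}(\rho_{\beta_1},\rho_{\beta_2})$, which yields an additivity relation; the paper then substitutes the triangle inequality into that relation and observes that the resulting expression is a nonpositive quantity that is also a perfect square, forcing the proportionality $\mathrm{d}^2_{\mathrm{SWFR}}(\rho_0,\rho_{\beta_1})/\beta_1^2 = \mathrm{d}^2_{\mathrm{SWFR}}(\rho_{\beta_1},\rho_{\beta_2})/(\beta_2-\beta_1)^2$. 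You instead stop at the one-sided competitor bound $\mathrm{d}^2_{\mathrm{SWFR}}(\rho_s,\rho_t)\le (t-s)A(s,t)$ and close the gap with a triangle-inequality-plus-Cauchy--Schwarz sandwich over an arbitrary partition, letting the equality case of Cauchy--Schwarz do the work of forcing linearity of $r\mapsto A(0,r)$. The two routes buy slightly different things: the paper's version establishes along the way that the restriction of the optimizer is itself an optimizer (a fact of independent interest), at the cost of the concatenation/admissibility check; yours avoids that check entirely and is somewhat shorter, but only produces the constant-speed conclusion. Your side remark that mean-subtraction commutes with the uniform scalar rescaling --- so the zero-mean constraint survives reparametrization --- is exactly the one point specific to the \emph{spherical} setting, and you handle it correctly.
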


\begin{proof}
Consider the optimal solution $\rho_{t}, g_{t}, v_{t}, t \in [0,1]$ satisfying minimization problem. For $\beta_{1},\beta_{2} \in [0,1], \beta_{1}<\beta_{2}$, we have
\begin{gather*}
    \begin{split}
        &\int_{\beta_{1}}^{\beta_{2}} \int_{\Omega}\left(\frac{1}{2}|v_t|^{2}+\frac{\alpha}{2} \left(g_t-\bar{g}_t\right)^{2}\right) \rho_t(dx) d t
        = \min _{\rho, v, g} \Bigg\{\int_{\beta_{1}}^{\beta_{2}} \int_{\Omega}\left(\frac{1}{2}|v|^{2}+\frac{\alpha}{2} (g-\bar{g})^{2}\right) \rho(dx) d t:\\
        &\hspace*{15em}  
        \partial_{t} \rho+\nabla \cdot\left(\rho v\right)=\rho (g-\bar{g}), \rho(\beta_{1})=\rho_{\beta_{1}}, \rho(\beta_{2})=\rho_{\beta_{2}}\Bigg\}\\
        = &\frac{1}{\beta_{2}-\beta_{1}}\min _{\rho_{\tau}, v_{\tau}, g_{\tau}} \Bigg\{\int_{0}^{1} \int_{\Omega}\left(\frac{1}{2}|v_{\tau}|^{2}+\frac{\alpha}{2} \left(g_{\tau}-\bar{g_{\tau}}\right)^{2}\right) \rho_{\tau}(dx) d \tau:\\ & \hspace*{12em}  
        \partial_{\tau} \rho_{\tau}+\nabla \cdot\left(\rho_{\tau} v_{\tau}\right)=\rho_{\tau} (g_{\tau}-\bar{g_{\tau}}),
       \rho_{\tau}(0)=\rho_{\beta_{1}}, \rho_{\tau}(1)=\rho_{\beta_{2}}\Bigg\}\\
        = &\frac{1}{\beta_{2}-\beta_{1}}\mathrm{d}_{\mathrm{SWFR}}^{2}(\rho_{\beta_{1}},\rho_{\beta_2}).
    \end{split}
\end{gather*}
The first equality follows directly from minimization problem. In fact, if it has a better solution $(\tilde{\rho}_t, \tilde{v}_t, \tilde{g}_t)_{\beta_{1}\le t\le \beta_{2}}$ such that the objective function is smaller, then we may concatenate with $(\rho_t, v_t, g_t)_{t\in[0,\beta_{1}]\cup [\beta_{2},1]}$ to get a better solution for the original minimization problem. Note that the concatenated solution is still continuous due to the boundary condition, so the continuity equation still holds in weak sense. The second equality is a simple time rescaling $\tau=\frac{1}{\beta_{2}-\beta_{1}} (t-\beta_1)$. Correspondingly, one has the following correspondence: $\rho_{\tau}(\tau,x)=\rho(t,x), v_{\tau}(x,\tau)=(\beta_{2}-\beta_{1}) v(x,t), g_{\tau}(x,\tau)=(\beta_{2}-\beta_{1}) g(x,t)$ for $\tau \in [0,1]$. Thus we have
\begin{equation}\label{eq:distancerelation}
    \begin{aligned}
        \frac{1}{\beta_2}\mathrm{d}_{\mathrm{SWFR}}^{2}(\rho_{0},\rho_{\beta_2})=\frac{1}{\beta_1}\mathrm{d}_{\mathrm{SWFR}}^{2}(\rho_{0},\rho_{\beta_1})+\frac{1}{\beta_2-\beta_1}\mathrm{d}_{\mathrm{SWFR}}^{2}(\rho_{\beta_1},\rho_{\beta_2}).
    \end{aligned}
\end{equation}
Note that $\mathrm{d}_{\mathrm{SWFR}}(\rho_{0},\rho_{\beta_{2}}) \leq \mathrm{d}_{\mathrm{SWFR}}(\rho_{0},\rho_{\beta_{1}})+ \mathrm{d}_{\mathrm{SWFR}}(\rho_{\beta_{1}},\rho_{\beta_{2}})$, thus we have
\begin{equation*}
    \begin{aligned}
    \frac{\beta_{2}-\beta_{1}}{\beta_{1}\beta_{2}}\mathrm{d}_{\mathrm{SWFR}}^{2}(\rho_{0},\rho_{\beta_{1}})+\frac{\beta_{1}}{(\beta_{2}-\beta_{1})\beta_{2}}&\mathrm{d}_{\mathrm{SWFR}}^{2}(\rho_{\beta_{1}},\rho_{\beta_{2}})-\\
    &\frac{2}{\beta_{2}}\mathrm{d}_{\mathrm{SWFR}}(\rho_{0},\rho_{\beta_{1}})\cdot\mathrm{d}_{\mathrm{SWFR}}(\rho_{\beta_{1}},\rho_{\beta_{2}}) \leq 0.
    \end{aligned}
\end{equation*}
In fact the left side $\geq 0$. Thus the equal sign holds only when
\begin{equation}\label{eq:disrelation2}
    \begin{aligned}
        \frac{\mathrm{d}_{\mathrm{SWFR}}^{2}(\rho_{0},\rho_{\beta_{1}})}{\beta_{1}^{2}}=\frac{\mathrm{d}_{\mathrm{SWFR}}^{2}(\rho_{\beta_{1}},\rho_{\beta_{2}})}{(\beta_{2}-\beta_{1})^{2}}.
    \end{aligned}
\end{equation}
Setting $\beta_2=1$ in \eqref{eq:distancerelation} and using \eqref{eq:disrelation2}, one finds that $\mathrm{d}_{\mathrm{SWFR}}(\rho_{0},\rho_{\beta_{1}})=\beta_1\mathrm{d}_{\mathrm{SWFR}}(\rho_{0},\rho_{1})$. It follows then by \eqref{eq:disrelation2} that 
$\mathrm{d}_{\mathrm{SWFR}}(\rho_{\beta_2},\rho_{\beta_1})
=(\beta_2-\beta_1)\mathrm{d}_{\mathrm{SWFR}}(\rho_0, \rho_1)$ for $\forall \beta_{1},\beta_{2} \in [0,1], \beta_{1}<\beta_{2}$.
By definition \eqref{constant geodesics} we know that the optimal $(\rho_t)_{t\in [0,1]}$ is the constant speed geodesic under spherical WFR.
\end{proof}

\subsection{Equations for the density and potential for the spherical WFR metric}

Below, we make use of Proposition \ref{pro:wfrgeo} to investigate the geodesics under spherical WFR metric.
Consider the following optimization problem
\begin{gather}\label{eq:mini}
\min_{\rho,v,g} \left\{\frac{1}{2} \int_0^1\int_{\mathbb{R}^d} \rho |v|^2+\alpha \rho g^2 \,dx\,dt, \partial_t\rho + \nabla \cdot (\rho v)=\rho g, \int_{\mathbb{R}^d} \rho g \,dx=0\right\},
\end{gather}
where $\alpha$, the source coefficient, is to balance the effects of transport and creation/destruction of mass explicitly. The Lagrangian function for \eqref{eq:mini} is
\[
\mathcal{L} =  \frac{1}{2}\int_0^1\int_{\mathbb{R}^d} \rho |v|^2+\alpha \rho g^2 \,dx\,dt-\int_0^1\int_{\mathbb{R}^d} \Phi(x,t) (\partial_t\rho + \nabla \cdot (\rho v)-\rho g)\,dx\,dt-\int_0^1 \gamma(t)\left(\int_{\mathbb{R}^d} \rho g \,dx\right)dt,
\]
where $\Phi(x,t)$ and $\gamma(t)$ are all Lagrange multipliers. Taking the variation, by the first order optimality conditions, one has
\begin{equation}\label{eq:optimal_condition}
\left\{
\begin{aligned}
&\frac{\delta \mathcal{L}}{\delta v}=0, \\
& \frac{\delta \mathcal{L}}{\delta g}=0, \\
& \frac{\delta \mathcal{L}}{\delta \rho}=0.
\end{aligned}
\right.
\Longrightarrow
\left\{
\begin{aligned}
&v=-\nabla \Phi, \\
& g=-\frac{1}{\alpha}(\Phi-\gamma(t)), \\
& \partial_t\Phi=\frac{1}{2}|\nabla \Phi|^2+\frac{1}{2\alpha}(\Phi-\gamma(t))^2.
\end{aligned}
\right.
\end{equation}
Using the fact $\int \rho g\, dx=0$, one can determine $\gamma(t)=\int \Phi \,d\rho=:\bar{\Phi}(t)$. Then the evolution of $\rho$ is governed by
\begin{equation}\label{eq:weight_PDE}
\partial_t\rho -\nabla \cdot (\rho \nabla \Phi)=-\frac{1}{\alpha}\rho (\Phi-\bar{\Phi}).
\end{equation}
To summarize, we get the equations of $\rho$ and $\Phi$
\begin{equation}\label{eq:uotequations}
\left\{
\begin{aligned}
& \partial_t\rho -\nabla \cdot (\rho \nabla \Phi)=-\frac{1}{\alpha}\rho (\Phi-\bar{\Phi}),\\
& \partial_t\Phi=\frac{1}{2}|\nabla \Phi|^2+\frac{1}{2\alpha}(\Phi-\bar{\Phi})^2.
\end{aligned}
\right.
\end{equation}
We remark that if we define a Hamiltonian 
\[
H(\rho,\Phi)=\frac{1}{2}\int_0^1\int_{\R^d}\rho|\nabla\Phi|^2+\frac{1}{\alpha}\rho(\Phi-\bar{\Phi})^2 \,dx\,dt,
\]
then system  \eqref{eq:uotequations}  can be rewritten as
\begin{equation*}
\left\{
\begin{aligned}
& \dot \rho = -\frac{\delta H}{\delta \Phi},\\
& \dot \Phi = \frac{\delta H}{\delta \rho}.
\end{aligned}
\right.
\end{equation*}

The Hamiltonian structure for the minimizer is in fact a consequence of the well-known Pontryagin Maximum Principle \cite{evans1983introduction}. The equation that governs the evolution of $\Phi$ is also called Hamilton-Jacobi equation, i.e. the second equation in \eqref{eq:uotequations}.


\subsection{The evolution of distribution and particles along the geodesics}

In order to learn the geodesics under spherical WFR, we will derive related equations of weight evolution, density formula and velocity field evolution first. We use $z: \R^{d} \times [0,T]\rightarrow \R^{d}$ to represent the trajectory of particles. Consider a weighted formulation
\begin{equation}\label{integral weight}
    \rho(z,t)=\int_{\mathbb{R}^d} w(x,t) \delta(z-z(x,t))p(x)d x
\end{equation}
for some measure $p$ so that $\rho_{0}(x)=p(x)w(x,0)$. We have the following claims about density and particle velocity along the trajectories.
\begin{theorem}
\begin{enumerate}[(i)]
    \item The particle weight satisfies
    \begin{gather}\label{eq:weight}
        w(x, t)=w(x,0) e^{-\frac{1}{\alpha}\int_0^t (\Phi(z(x,s),s)-\bar{\Phi}(s))ds},
    \end{gather}
    which directly leads to density formula under spherical WFR as
    \begin{gather}\label{eq:density_uot}
        \rho_0(x)e^{-\frac{1}{\alpha}\int _{0}^{t}(\Phi(z(x,s),s)-\bar{\Phi}(s))ds}=\rho(z(x,t),t)\cdot \mathrm{det} (\nabla z(x,t)).
    \end{gather}
    \item  For the particle velocity under spherical WFR, it holds that
    \begin{gather}\label{eq:veolocity and weight}
        v(z(x,t),t)=v(x,0)e^{\frac{1}{\alpha}\int_0^t(\Phi(z(x,s),s)-\bar{\Phi}(s))ds}=\frac{v(x,0)w(x,0)}{w(x,t)}.
    \end{gather}
Consequently, the particles move in straight lines.
    \end{enumerate}
\end{theorem}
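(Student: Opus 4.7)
The plan is to exploit the two optimality relations $v = -\nabla\Phi$ and $g = -\frac{1}{\alpha}(\Phi-\bar\Phi)$ from \eqref{eq:optimal_condition}, together with the Hamilton-Jacobi equation in \eqref{eq:uotequations}, to derive simple linear ODEs for the weight $w(x,t)$ and for the velocity along a fixed characteristic $t\mapsto z(x,t)$, and then integrate them explicitly.

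For part (i), I would insert the weighted ansatz \eqref{integral weight} into the source-driven continuity equation \eqref{eq:weight_PDE}. Writing the flow as the solution of $\dot z(x,t) = v(z(x,t),t) = -\nabla\Phi(z(x,t),t)$ with $z(x,0)=x$, the Lagrangian form of \eqref{eq:weight_PDE} reads $\frac{d}{dt}\bigl(\rho(z(x,t),t)\,\det\nabla z(x,t)\bigr) = g(z(x,t),t)\,\rho(z(x,t),t)\,\det\nabla z(x,t)$. Since $\rho(z(x,t),t)\det\nabla z(x,t) = w(x,t)p(x)$ by the change-of-variables interpretation of \eqref{integral weight}, this collapses to the scalar ODE
\begin{equation*}
\frac{d}{dt}w(x,t) = -\frac{1}{\alpha}\bigl(\Phi(z(x,t),t) - \bar{\Phi}(t)\bigr)\,w(x,t),
\end{equation*}
whose integration gives \eqref{eq:weight}. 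The density formula \eqref{eq:density_uot} then follows immediately by combining $\rho(z(x,t),t)\det\nabla z(x,t) = w(x,t)p(x)$ with $p(x)w(x,0) = \rho_0(x)$ and substituting \eqref{eq:weight}.

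For part (ii), I would differentiate the identity $v(z(x,t),t) = -\nabla\Phi(z(x,t),t)$ along the trajectory, obtaining $\frac{d}{dt}v = -\partial_t\nabla\Phi - (\dot z\cdot\nabla)\nabla\Phi$. Applying $\nabla$ to the Hamilton-Jacobi equation in \eqref{eq:uotequations} and using the elementary identity $\nabla\bigl(\tfrac{1}{2}|\nabla\Phi|^2\bigr) = (\nabla\Phi\cdot\nabla)\nabla\Phi$ (and noting $\nabla\bar\Phi = 0$), the Hessian-type term $(\nabla\Phi\cdot\nabla)\nabla\Phi$ exactly cancels against the convective contribution $(\dot z\cdot\nabla)\nabla\Phi = -(\nabla\Phi\cdot\nabla)\nabla\Phi$, leaving the linear ODE
\begin{equation*}
\frac{d}{dt}v(z(x,t),t) = \frac{1}{\alpha}\bigl(\Phi(z(x,t),t) - \bar{\Phi}(t)\bigr)\,v(z(x,t),t).
\end{equation*}
Integrating yields the first equality in \eqref{eq:veolocity and weight}, and matching the exponential factor with \eqref{eq:weight} produces the second. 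Because $w(x,t) > 0$ throughout, $v(z(x,t),t)$ remains a strictly positive scalar multiple of $v(x,0)$, so its direction is preserved along the trajectory, which is exactly the statement that the particles move on straight lines.

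The main obstacle is justifying the two cancellations cleanly. In part (i), the step in which the transport term is absorbed by the pushforward is most transparent at the level of the distributional identity \eqref{integral weight}, and making it rigorous requires either testing against smooth compactly supported functions or invoking the method of characteristics for the continuity equation with source. In part (ii), the chain-rule cancellation relies on $\Phi$ being at least $C^2$ in space; for merely viscosity-type potentials one would instead work directly at the Lagrangian level via the Pontryagin maximum principle already invoked after \eqref{eq:uotequations}.
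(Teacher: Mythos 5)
Your proof is correct and follows essentially the same route as the paper: for (i) you insert the weighted ansatz \eqref{integral weight} into the source-driven continuity equation to obtain the scalar linear ODE for $w$ and integrate, and for (ii) you take the spatial gradient of the Hamilton--Jacobi equation and track $v=-\nabla\Phi$ along the flow, observing that the convective and Hessian terms cancel to leave a linear ODE for the speed. The only real difference is in how the density formula \eqref{eq:density_uot} is finished: the paper invokes a ``conservation of mass'' identity $\int_{\Omega}\rho_0\,dx=\int_{z(\Omega,t)}\rho(z,t)\,dz$ for arbitrary $\Omega$ before applying the change of variables, whereas you read off $\rho(z(x,t),t)\det\nabla z(x,t)=w(x,t)p(x)$ directly from the ansatz and then substitute $p(x)w(x,0)=\rho_0(x)$ together with the integrated weight ODE. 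Your route is the cleaner one here, since local mass is \emph{not} conserved along the flow in the presence of the source (it changes by the factor $e^{\int_0^t g\,ds}$), so that identity as stated for every $\Omega$ would contradict the very formula being proved; the correct content is exactly the Lagrangian relation you use. In part (ii) the paper introduces an auxiliary characteristic $\gamma(s;x,t)$ and the Burgers-type form $\partial_t v+v\cdot\nabla v-\tfrac{1}{\alpha}(\Phi-\bar\Phi)v=0$, while you differentiate $v(z(x,t),t)$ directly; these are the same computation in different notation, and your cancellation and the final straight-line argument (positivity of $w$ preserves direction) match the paper's.
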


\begin{proof}
\begin{enumerate}[(i)]
    \item  By taking \eqref{integral weight} into \eqref{eq:weight_PDE}, we get 

\begin{equation*}\label{weight and position system}
\left\{
\begin{aligned}
& \frac{d}{dt}z(x,t)=-\nabla \Phi(z(x,t),t),\\
& \frac{d}{dt}w(x,t)=-\frac{1}{\alpha}\left(\Phi(z(x,t),t)-\bar{\Phi}(t)\right)w(x,t).
\end{aligned}
\right.
\end{equation*}
Then one can solve $w(x,t)$ as 
\begin{gather*}
        w(x, t)=w(x,0) e^{-\frac{1}{\alpha}\int_0^t (\Phi(z(x,s),s)-\bar{\Phi}(s))ds}.
\end{gather*}
Consider the conservation of mass, for $\forall \, \Omega \in \mathcal{R}^{d}$, we have:
\begin{equation*}
    \int_{\Omega} \rho_{0}(x) d x =\int_{z(\Omega,t)} \rho(z(x,t),t) d z.
\end{equation*}
By the change of variables formula and \eqref{integral weight}, it follows that 

    \[
    \rho_0(x)e^{-\frac{1}{\alpha}\int _{0}^{t}(\Phi(z(x,s),s)-\bar{\Phi}(s))ds}=\rho(z(x,t),t)\, \text{det} (\nabla z(x,t)).
    \]

    \item Recall \eqref{eq:optimal_condition} from Lagrangian equation of spherical WFR, taking the gradient of the Hamilton-Jacobi equation, one has 
\[
    \partial_t v+v\cdot \nabla v-\frac{1}{\alpha}(\Phi-\bar{\Phi}) v=0.
\]
We use the classical argument of characteristic lines. Let $\gamma(s;x,t)$ be the characteristic line which satisfies
\begin{equation*}
    \left\{
\begin{aligned}
&\frac{d\gamma(s;x,t)}{ds}=v(\gamma(s;x,t),s), \\
&\gamma(t;x,t)=x,
\end{aligned}
\right.
\end{equation*}
then $U(s):=v(\gamma(s;x,t),s)$ satisfies

\begin{equation*}
    U^{'}(s)=\partial_tv+v\cdot \nabla v=\frac{1}{\alpha}(\Phi-\bar{\Phi})v(\gamma(s;x,t),s)=\frac{1}{\alpha}(\Phi-\bar{\Phi})U(s).
\end{equation*}
It follows that
\begin{equation*}
    v(x,t)=U(t)=U(0)e^{\frac{1}{\alpha}\int_0^t(\Phi(\gamma(s;x,t),s)-\bar{\Phi}(s))ds}.
\end{equation*}
Note that $\gamma(s;z(x,t),t)=z(x,s)$ and $\gamma(0;z(x,t),t)=x$. Hence 
\[
        v(z(x,t),t)=v(x,0)e^{\frac{1}{\alpha}\int_0^t(\Phi(z(x,s),s)-\bar{\Phi}(s))ds}=\frac{v(x,0)w(x,0)}{w(x,t)}.
\]
Since the direction of the velocity does not change, the particles move in straight lines, which concludes the proof.
\end{enumerate}
\end{proof}

\begin{remark}
$\rho_0(x)e^{-\frac{1}{\alpha}\int _{0}^{t}(\Phi(z(x,s),s)-\bar{\Phi}(s))ds}$ is a probability density with respect to $x$ for every $t\ge 0$.
\end{remark}

The formula \eqref{eq:veolocity and weight} indicates that, in the optimal case, the magnitude of particle velocity is inversely proportional to its weight while the direction of velocity remains the same along each trajectory.

\section{Learning geodesics under spherical WFR}\label{sec:math_uot}

In this section, we propose a deep learning framework to learn the geodesics under spherical WFR metric. A KL divergence term based on the inverse map is used for the terminal condition. Moreover, a new regularization term based on particle velocity and weight is introduced in section \ref{subsec:regularization} as a substitute for the Hamilton-Jacobi equation for the potential in dynamic formulation. Then, in section \ref{subsec:total cost}, we obtain the total cost for our model and make some discussion on the hyper-parameter $\gamma_1$. Detailed implementation of the algorithm is showed in section \ref{sec:detail_algo}.


\subsection{Imposing the terminal condition}\label{subsec:difficulty}

Suppose we are given a starting distribution $\rho_0$, and we may know or may not know the expression. In the latter case, we may assume some samples drawn from $\rho_0$. We are also given the terminal distribution $\rho_1$ up to potentially an unknown normalizing constant.

To compute the geodesics, we may impose the terminal condition $\rho(\cdot, t=1)=\rho_1$. We propose in this work to construct the KL divergence using the inverse map. In particular, we consider the probability distribution evolved from the terminal distribution $\rho_1$, denoted by $\tilde{\rho}_0$. Then, compare it to the initial distribution $\rho_0$. 

Let $\widetilde{\rho}(z,T-t):=\rho(z,t)$. One can deduce from \eqref{eq:uotequations} that
\begin{gather}\label{eq:inverse flow}
\left\{
\begin{aligned}
&\partial_t\widetilde{\rho}(z,t)+\nabla \cdot (\widetilde{\rho}(z,t)\nabla\Phi(z,T-t))=\frac{1}{\alpha}\widetilde{\rho}(z,t)(\Phi(z,T-t)-\hat{\Phi}(T-t)),\\
&\widetilde{\rho}(z,0)=\rho_1(z), \quad \hat{\Phi}(T-t)=\int \Phi(z,T-t)\widetilde{\rho}(z,t)dz.
\end{aligned}
\right.
\end{gather}

Similarly, using $x(z,t)=z(x,T-t)$, it holds that
\begin{gather}
    \rho_1(z)e^{\frac{1}{\alpha}\int _{0}^{T}(\Phi(x(z,t),T-t)-\hat{\Phi}(T-t))dt}=\widetilde{\rho}(x(z,T),T)\, \mathrm{det} (\nabla_z x(z,T)),
\end{gather}
which is
\begin{gather}
    \widetilde{\rho}(x,T)e^{-\frac{1}{\alpha}\int _{0}^{T}(\Phi(z(x,t),t)-\hat{\Phi}(t))dt}=\rho_1(z(x,T)) \,\mathrm{det} (\nabla_x z(x,T)).
\end{gather}
We turn to minimize the KL divergence between $\widetilde{\rho}(x,T)$ and $\rho_0(x)$ 
\begin{gather}\label{new KL equation}
\begin{split}
    &\mathrm{D}_{\mathrm{KL}}[\rho_0(x)\|\widetilde{\rho}(x,T)]=\int_{\mathbb{R}^d}\text{log}\left(\frac{\rho_0(x)}{\widetilde{\rho}(x,T)}\right)\rho_0(x)dx\\
    &=\int_{\R^d}\text{log}\left(\frac{\rho_0(x)e^{-\frac{1}{\alpha}\int _{0}^{T}(\Phi(z(x,t),t)-\hat{\Phi}(t))dt}}{\rho_1(z(x,T))\cdot \text{det}(\nabla_x(z(x,T)))}\right)\rho_0(x)dx\\
    &= \int_{\mathbb{R}^d} [-\text{log}(\rho_1(z(x,T))-\log \det(\nabla z(x,T))]\rho_0(x)dx\\
    &\quad \quad -\frac{1}{\alpha}\int_{\mathbb{R}^d} \Big(\int_0^T(\Phi(z(x,t),t)-\hat{\Phi}(t))dt\Big)\rho_0(x)dx+ \int_{\mathbb{R}^d} \text{log}(\rho_0(x))\rho_0(x)dx.
\end{split}
\end{gather}
The term $\int\log(\rho_0(x))\rho_0(x)dx$ can be dropped, since it is unrelated to the parameters to optimize.

We now explain why we use a new KL divergence based on the inversed map to impose terminal condition, instead of directly using KL divergence between $\rho(\cdot,t=1)$ and $\rho_{1}$. Mimicking the KL divergence approach in the CNF framework and using \eqref{eq:density_uot}, we have
\begin{gather}
  \begin{split}
  & \mathrm{D}_{\mathrm{KL}}[\rho(z(x,T))\| \rho_1(z)] =\int_{\mathbb{R}^d} \log\left(\frac{\rho(z(x,T))}{\rho_1(z(x,T))}\right)\rho(z(x,T))dz\\
             &= \int_{\mathbb{R}^d} [-\text{log}(\rho_1(z(x,T))-\log(\text{det}(\nabla z(x,T))]\rho_0(x)e^{-\frac{1}{\alpha}\int _{0}^{T}(\Phi(z(x,t),t)-\bar{\Phi}(t))dt} dx \\
             &\quad \quad -\frac{1}{\alpha}\int_{\mathbb{R}^d} \left(\int_0^T(\Phi(z(x,t),t)-\bar{\Phi}(t))dt\right)\rho_0(x)e^{-\frac{1}{\alpha}\int _{0}^{T}(\Phi(z(x,t),t)-\bar{\Phi}(t))dt} dx\\
             &\quad \quad +
             \int_{\mathbb{R}^d} \text{log}(\rho_0(x))\rho_0(x)e^{-\frac{1}{\alpha}\int _{0}^{T}(\Phi(z(x,t),t)-\bar{\Phi}(t))dt} dx\\
             &=:I_1+I_2+I_3.
  \end{split} \label{kl}
\end{gather}
Clearly, $I_1$ and $I_2$ can be approximated using the standard Monte Carlo approximation. However, $I_3$ cannot be computed easily if the expression of $\rho_0$ is unknown or complicated, and we cannot drop it since it is not a constant with respect to model parameters.

There are also two alternative options to address the problem of estimating $I_{3}$. We tried them in solving above problems brought by weight change, while we decided not to follow these approaches due to the drawbacks discussed below.
\begin{itemize}
\item {\bf Estimating  initial density}
 One possible strategy is to use the variational expression of $\log(\rho_0(x)/\rho(x))$ introduced in  \cite{johnson2019framework} to approximate $\log (\rho_0)$. In fact,
\[
\log(\rho_0/\rho)=\text{argmin}_{D^{\prime}}[\mathbb{E}_{x\sim \rho_0}\text{log}(1+e^{-D^{\prime}(x)})+\mathbb{E}_{x\sim \rho}\text{log}(1+e^{D^{\prime}(x)})],
\]
where the argument of the minimization problem $D'$ is a function of $x$.
In practice, one may take some class of functions for the optimization and use the optimal one $\mathcal{D}(x)$ from this class to approximate the theoretical optimum. Taking $\rho(x)$ as the standard normal distribution,  $\log(\rho_0)$ can be then computed as
\begin{equation}\label{eq:log_rho0}
    \text{log}(\rho_0(x))=-\frac{1}{2}|x|^2-\frac{d}{2}\text{log}(2\pi)+\mathcal{D}(x).
\end{equation}

\item {\bf Kernelized Stein Discrepancy (KSD)}
The second potential approach is replacing KL divergence with another weak metric such as the kernelized Stein discrepancy \cite{liu2016kernelized, chwialkowski2016kernel,gorham2017measuring} or the Wasserstein distances, to avoid the density estimation. 
\end{itemize}

The approach to approximate $\log(\rho_0)$ by \eqref{eq:log_rho0} works in some toy cases but is not appealing as a building block in our algorithm. First, in high dimension case such estimation can be tricky and costly. Second, estimating data density is one of essential applications of CNF models. Though approximating $\rho_0$ out first is feasible, we would prefer solutions without estimating $\rho_{0}$ to build our framework. For the alternative KSD metric, we were not able to obtain satisfactory result when the initial density's support is not connected. \cite{hu2018stein} argues that when KSD is small, it means that within the region of generated samples, the score function of $s_p$ matches the target score function $s_q$ well. An almost-zero empirical KSD does not necessarily imply capturing all the modes or recovering all the support of the true density. Considering these reasons and results of numerical experiments we did, we choose to construct KL divergence of $\rho_{0}$ using inverse map in our framework eventually.

\subsection{Regularization}\label{subsec:regularization}

For OT-Flow, Onken et al. \cite{onken2020ot} used HJB equation of $\Phi$ to construct a regularization to help training and accelerate convergence. We hope to construct a regularization similarly to obtain a better velocity field in training. Instead of using the Hamilton-Jacobi equation, we make use of the relationship between velocity field and weight in spherical WFR \eqref{eq:veolocity and weight} to impose
\begin{gather}\label{regularization Rv}
\begin{split}
    \mathbf{R}_{v} &:=\int_0^T\int_{\mathbb{R}^d}\left|v(z(x,t),t)w(x,t)-v(x,0)w(x,0)\right|^2\rho_0(x)e^{-\frac{1}{\alpha}\int_0^t(\Phi(z(x,s),s)-\bar{\Phi}(s))ds}\,dx\,dt \\
    &=\int_0^T\int_{\mathbb{R}^d}\left|v(z(x,t),t)w(x,t)-v(x,0)w(x,0)\right|^2\rho_0(x)\frac{w(x,t)}{w(x,0)}\,dx\,dt
\end{split}
\end{gather}
as regulariztion. Such a term penalizes the velocity field along the trajectory, which can lead to a better velocity field suited for our framework in training.

\subsection{Total cost}\label{subsec:total cost}

We are supposed to learn the geodesics under spherical WFR. In our framework, once the potential $\Phi$ is known, the velocity field, the source field and thus geodesic curve can be computed. Thus we parameterize $\Phi$ as an output of a neural network, and total cost function with regularization can be written as:
\begin{equation}\label{eq:total_loss}
    J(\Phi)=\mathrm{D}_{\mathrm{KL}}[\rho_0(x)\| \widetilde{\rho}(x,T)]+\gamma_1\cdot\int_{0}^{T} \int_{\mathbb{R}^{d}} \left(\frac{1}{2} |\nabla\Phi|^{2}  +\frac{1}{2\alpha}  (\Phi-\bar{\Phi})^{2} \right) \rho(d z) d t+\gamma_2\cdot\mathbf{R}_{v},
\end{equation}
where $\gamma_1$ and $\gamma_2$ are the hyper-parameters to balance the terms in cost function $J$. Since we will sample from data space later to approximate the value by Monte-Carlo, we rewrite the integral into the form of expectation over initial density for convenience. By \ref{new KL equation} the KL term can be rewritten as
\begin{multline*}
    \mathrm{D}_{\mathrm{KL}}[\rho_0(x)\| \widetilde{\rho}(x,T)]=\mathbb{E}_{\rho_{0}(x)} \Bigg[-\log(\rho_1(z(x,T))-\log \det(\nabla z(x,T)) \\-\frac{1}{\alpha} \int_0^T(\Phi(z(x,t),t)-\hat{\Phi}(t))dt\Bigg]
    +\mathbb{E}_{\rho_{0}(x)}\Big[ \log(\rho_{0}(x))\Big].
\end{multline*}
where $\hat{\Phi}(t)$ is defined in \eqref{eq:inverse flow} and the detail of the implementation will be explained in subsection \ref{sec:detail_algo}.
The second part can be rewritten as:
\begin{multline*}
\gamma_1\cdot\int_{0}^{T} \int_{\mathbb{R}^{d}} \left(\frac{1}{2} |\nabla\Phi|^{2}  +\frac{1}{2\alpha}  (\Phi-\bar{\Phi})^{2} \right) \rho(d z) d t \\
= \frac{\gamma_{1}}{2}\,\mathbb{E}_{\rho_{0}(x)} \Bigg[ \int_0^T \Big( |\nabla\Phi(z(x,t),t)|^{2}+\frac{1}{\alpha} (\Phi(z(x,t),t)-\bar{\Phi}(t))^2 \Big) \cdot e^{-\frac{1}{\alpha}\int _{0}^{t}(\Phi(z(x,s),s)-\bar{\Phi}(s))d s}\, d t \Bigg].
\end{multline*}

The third term is obvious from \eqref{regularization Rv}:
\[
    \gamma_{2}\cdot\mathbf{R}_{v}=\gamma_{2}\,\mathbb{E}_{\rho_{0}(x)} \Bigg[\int_0^T|\nabla\Phi(z(x,t),t)w(x,t)-\nabla\Phi(x,0)w(x,0)|^2\frac{w(x,t)}{w(x,0)}dt \Bigg].
\]

\subsection*{Discussion on the hyper-parameter $\gamma_1$}

The bigness of the parameter $\gamma_{1}$ is important. 
As we hope the constraint $\rho(\cdot, t=1)=\rho_1$ to be a hard constraint (i.e.  the KL divergence should be zero), we choose $\gamma_{1}$ to be relatively small. In fact, if we ignore the regularization term and consider parameter $\gamma_{1}$ alone. It may be shown theoretically that if $\gamma_{1}\rightarrow 0$, the optimal solution will converge to the geodesics. 
In particular, the following proposition tells us why this is a reasonable approach.
\begin{proposition}\label{prop:small_gamma}
Consider the optimization problem
\[
\min_{u\in X} E(u)+\gamma F(u).
\]
If both functionals are convex, lower semi-continuous and $U:=\mathrm{argmin}(E)$ is nonempty, then as $\gamma\to 0^+$, a cluster point of the minimizers $\{ u(\gamma)\}$ is a solution to the following problem
\[
\min_{u\in U} F(u).
\]
\end{proposition}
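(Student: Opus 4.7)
The plan is to exploit the variational inequality satisfied by minimizers $u(\gamma)$ of $E+\gamma F$ by comparing them with elements of $U=\mathrm{argmin}\,E$, and then pass to the limit $\gamma\to 0^{+}$ using the lower semicontinuity of the two functionals. This is a standard penalization/selection argument in the spirit of Tikhonov regularization; convexity itself does not enter in any essential way for the qualitative statement, the key tools being optimality and lower semicontinuity.

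First I would fix an arbitrary $u^{*}\in U$ and write down the optimality inequality $E(u(\gamma))+\gamma F(u(\gamma))\le E(u^{*})+\gamma F(u^{*})$. Since $E(u(\gamma))\ge \min E=E(u^{*})$, rearranging and dividing by $\gamma>0$ immediately gives the \emph{a priori} bound
\[
F(u(\gamma))\le F(u^{*})\quad\text{for every }u^{*}\in U,
\]
so the $F$-values along the family of minimizers are uniformly controlled by the minimum of $F$ on $U$.

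Next, let $\bar u$ be a cluster point of $\{u(\gamma_{k})\}$ along some sequence $\gamma_{k}\to 0^{+}$. The same optimality inequality rewrites as
\[
E(u(\gamma_{k}))\le E(u^{*})+\gamma_{k}\bigl(F(u^{*})-F(u(\gamma_{k}))\bigr),
\]
and provided $F$ is bounded below on the relevant set—automatic in our application since $F\ge 0$—the right-hand side tends to $\min E$. Combined with the lower semicontinuity of $E$, this yields $E(\bar u)\le \liminf_{k}E(u(\gamma_{k}))\le \min E$, so $\bar u\in U$. Finally, the lower semicontinuity of $F$ together with the bound from the previous paragraph gives $F(\bar u)\le \liminf_{k}F(u(\gamma_{k}))\le F(u^{*})$ for every $u^{*}\in U$, and since $\bar u\in U$ itself, $\bar u$ is a minimizer of $F$ over $U$, which is the claim.

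The main technical obstacle is the control $\limsup_{k}E(u(\gamma_{k}))\le \min E$, which requires ruling out $F(u(\gamma_{k}))\to -\infty$. Convexity and lower semicontinuity alone do not furnish a lower bound on $F$, so one must either assume $F$ is bounded below (natural in our setting, where $F$ is a nonnegative energy) or invoke a coercivity hypothesis. Note also that only subsequential convergence is obtained: pinning down a limit of the full family $\{u(\gamma)\}$ would require strict convexity of $F$ on $U$ or a compactness argument, neither of which is asserted in the proposition.
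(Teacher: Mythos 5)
Your argument is correct and follows a more elementary route than the paper's. The paper's proof rescales to $J_{\alpha}=F(u)+\alpha\,(E(u)-E_*)$ with $\alpha=1/\gamma$ and invokes $\Gamma$-convergence of $J_{\alpha}$ to $J_{\infty}=F+\mathbf{1}(U)$ as $\alpha\to\infty$, then appeals to the standard fact that $\Gamma$-convergence (plus equicoercivity) transports minimizers to minimizers. You instead work directly from the optimality inequality: comparing $u(\gamma)$ against a fixed $u^{*}\in U$ yields $F(u(\gamma))\le F(u^{*})$, lower semicontinuity of $E$ forces any cluster point into $U$, and lower semicontinuity of $F$ then closes the argument. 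The two approaches prove the same thing; yours avoids quoting the $\Gamma$-convergence framework and makes the mechanism transparent, while the paper's version situates the result inside a general machinery (useful if one later wants quantitative or stability refinements). One remark on your flagged obstacle: the hypothesis that $F$ be bounded below is not actually needed beyond properness. Along the subsequence $u(\gamma_{k})\to\bar u$, lower semicontinuity gives $\liminf_{k}F(u(\gamma_{k}))\ge F(\bar u)>-\infty$ (a proper convex lsc functional never takes the value $-\infty$), so $F(u(\gamma_{k}))$ is eventually bounded below and $\gamma_{k}\bigl(F(u^{*})-F(u(\gamma_{k}))\bigr)\to 0$ follows without any extra coercivity assumption on $F$. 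Your observation that only subsequential convergence is obtained, absent strict convexity or compactness, is accurate and matches the way the proposition is stated.
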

For the proof of this proposition, one may set $E_*=\inf E(u)>-\infty$ since $U$ is assumed to be nonempty. 
Then, consider $J_{\alpha}=F(u)+\alpha (E(u)-E_*)$
and $J_{\infty}=F(u)+\mathbf{1}(U)$, where the indicator function 
\[
\mathbf{1}(U)=\begin{cases}
0 & x\in U\\
+\infty & x\notin U.
\end{cases}
\]
Then, one may verify the $\Gamma$-convergence of $J_{\alpha}$ to $J_{\infty}$ as $\alpha\to\infty$ \cite{braides2002gamma, chizat2018unbalanced}. 
This proposition suggests that if we choose $\gamma_1$ small enough, our framework can give a good approximation to the geodesics under the spherical WFR metric.

As an additional remark, if the parameter $\gamma_1$ is big, a single step is like an implicit Euler step for the gradient descent of the KL divergence, which is the generalization of Jordan-Kinderlehrer-Otto (JKO) scheme \cite{de1993new,jordan1998variational,li2022computational}. 

\subsection{Detailed algorithm and implementation}\label{sec:detail_algo}

In practical computations, we can only use discrete particles to approximate the high dimensional distribution. Consider empirical measure for particle system: 
\[
    \rho(x,t)=\sum_{i=1}^{n}w_i(t)\delta(x-x_i(t)),
\]
where $w_i(t)$ denotes the weight of particle $x_i$ at time $t$. 
Taking the empirical measure into \eqref{eq:weight_PDE} and considering the second ODE in \eqref{eq:original ODE}, one obtains following ODE system
\begin{equation}\label{discrete ODE system}
\partial_{t}\left[\begin{array}{c}
z_{i}(t) \\
w_{i}(t)\\
\ell(x_{i}, t)
\end{array}\right]=\left[\begin{array}{c}
-\nabla \Phi(z_i(t),t) \\
-\frac{1}{\alpha}(\Phi(z_i(t),t)-\bar{\Phi}(t))w_i(t)\\
-\operatorname{tr}(\nabla^{2} \Phi(z_{i}(t), t )) \\
\end{array}\right], \quad\left[\begin{array}{c}
z_{i}(0) \\
w_{i}(0)\\
\ell(x_{i},0)
\end{array}\right]=\left[\begin{array}{c}
x_{i} \\
w_{0}(x_{i})\\
0
\end{array}\right].
\end{equation}
where $\bar{\Phi}$ is approximated by
\begin{gather}\label{eq:barphiapprox}
\bar{\Phi}(t)=\frac{1}{n}\sum_{i=1}^n w_i(t)\Phi(z_i(t), t)
\end{gather}
and we identify $z_i(t)$ with $z(x_i,t)$ as the $i^{\mathrm{th}}$ particle's trajectory and $w_{0}(x_{i})$ as $i^{\mathrm{th}}$ particle's initial weight. Assume that $\sum_{i=1}^n w_i(0)=n$, then $\sum_{i=1}^n w_i(t)\equiv n$ for any $t>0$.
Alternatively, one may choose any convenient $\bar{\Phi}(t)$ to integrate \eqref{discrete ODE system} and then normalize $\{w_i(t)\}$ for the sum to be $n$ so that the true $\bar{\Phi}(t)$ can be approximated using \eqref{eq:barphiapprox}.

The first term in \eqref{eq:total_loss} is the KL divergence
\begin{multline*}
    \mathrm{D}_{\mathrm{KL}}[\rho_0(x)\| \widetilde{\rho}(x,T)]=\mathbb{E}_{\rho_{0}(x)} \Bigg[-\log(\rho_1(z(x,T))-\log \det(\nabla z(x,T)) \\-\frac{1}{\alpha} \int_0^T(\Phi(z(x,t),t)-\hat{\Phi}(t))dt\Bigg]
    +\mathbb{E}_{\rho_{0}(x)}\Big[ \log(\rho_{0}(x))\Big].
\end{multline*}
We can drop the term $\mathbb{E}_{\rho_{0}(x)}\Big[ \log(\rho_{0}(x))\Big]$ which does not affect the training, thus
\begin{gather*}
\begin{split}
    J_{\mathrm{KL}}&:=\mathbb{E}_{\rho_{o}(x)} \Bigg[- \log \rho_1(z(x,T)) -\ell(x,T) +\frac{1}{\alpha}\int_0^T(\Phi(z(x,t),t)-\hat\Phi(t))dt\Bigg]\\
    & \approx\frac{1}{n} \sum_{i=1}^{n}\Bigg[ -\ell(x_i,T)-\log \rho_1(z(x_{i},T))+\frac{1}{\alpha}\int_0^T \Phi(z(x_{i},t),t) dt\Bigg]-\frac{1}{\alpha}
    \int_0^T \hat\Phi(t)\,dt.
\end{split}
\end{gather*}
To compute $\int_0^T \hat{\Phi}(t)\,dt=\int_0^T \hat{\Phi}(T-t)\,dt$, assuming $\{\hat{Z}_{j}\}_{j=1}^{N}$ are samples from $\rho_{1}$, we solve
\begin{equation}\label{inverse discrete ODE system}
\partial_{t}\left[\begin{array}{c}
\hat{x}_{j}(t) \\
\hat{w}_{j}(t)\\
\end{array}\right]=\left[\begin{array}{c}
\nabla \Phi(\hat{x}_j(t),T-t) \\
\frac{1}{\alpha}(\Phi(\hat{x}_j(t),T-t)-\hat{\Phi}(T-t))\hat{w}_j(t)\\
\end{array}\right], \quad\left[\begin{array}{c}
\hat{x}_{j}(0) \\
\hat{w}_{j}(0)\\
\end{array}\right]=\left[\begin{array}{c}
\hat{Z}_{j} \\
1\\
\end{array}\right].
\end{equation}
Here $\hat{\Phi}$ can be calculated as
\begin{gather}\label{eq:hatPhiapprox}
\hat\Phi(T-t) = \frac{1}{N}\sum_{j=1}^{N} \hat{w}_{j}(t)\Phi(\hat{x}_{j}(t),T-t).
\end{gather}
Alternatively,  one may solve the ODE system \eqref{discrete ODE system} backward in time with terminal conditions to approximate $\hat\Phi$. Again in \eqref{inverse discrete ODE system}, the function $\hat{\Phi}(T-t)$ can be replaced by any other convenient function (e.g. 0) and one needs only to normalize $\hat{w}_j(t)$ and then use \eqref{eq:hatPhiapprox} for the approximation.

The second term in \eqref{eq:total_loss} characterizes spherical WFR distance
\begin{equation*}
\begin{aligned}
    J_{\mathrm{SWFR}}&:=\int_{0}^{T} \int_{\mathbb{R}^{d}} \left(\frac{1}{2} \rho |v|^{2}  +\frac{1}{2} \alpha \rho  g^{2} \right) d z\, d t\\
    &=\frac{1}{2}\,\mathbb{E}_{\rho_{0}(x)} \Bigg[ \int_0^T \Big( |\nabla\Phi(z(x,t),t)|^{2}+\frac{1}{\alpha} (\Phi(z(x,t),t)-\bar{\Phi}(t))^2 \Big) e^{-\frac{1}{\alpha}\int _{0}^{t}(\Phi(z(x,s),s)-\bar{\Phi}(s))d s}\, d t \Bigg]\\
    &\approx \frac{1}{2n} \sum_{i=1}^{n}\Bigg[ \int_{0}^{T} \left( |\nabla\Phi(z(x_i,t),t)|^{2}+\frac{1}{\alpha} (\Phi(z(x_{i},t),t)-\bar\Phi(t))^{2} \right) \frac{w_i(t)}{w_i(0)} \,d t\Bigg],
\end{aligned}
\end{equation*}
where $\bar\Phi(t)$ is approximated by \eqref{eq:barphiapprox}.

The regularization term in \eqref{eq:total_loss} is:
\begin{equation*}
\begin{aligned}
    J_{\mathrm{R}}:=\mathbf{R}_{v}
    &=\mathbb{E}_{\rho_{0}(x)} \Bigg[\int_0^T|\nabla\Phi(z(x,t),t)w(x,t)-\nabla\Phi(x,0)w(x,0)|^2\frac{w(x,t)}{w(x,0)}\,dt \Bigg]\\
    & \approx \frac{1}{n}\sum_{i=1}^{n}\Bigg[\int_{0}^{T}  |\nabla\Phi(z(x_i,t),t)w_{i}(t)-\nabla\Phi(x_i,0)w_i(0)|^2 \frac{w_i(t)}{w_i(0)}\,d t\Bigg].
\end{aligned}
\end{equation*}

We parameterize $\Phi$ with a neural network and use ADAM optimizer \cite{kingma2014adam} to minimize the cost function in \eqref{eq:total_loss}, which is $J= J_{\mathrm{KL}}+\gamma_1 J_{\mathrm{SWFR}}+\gamma_2J_{\mathrm{R}}$ indeed.  Then we formulate our deep learning framework for computing geodesics under spherical WFR in Algorithm \ref{alg:1}.

\begin{algorithm}
	\renewcommand{\algorithmicrequire}{\textbf{Notation:}}
	\renewcommand{\algorithmicensure}{\textbf{Require:}}
	\caption{Framework for computing geodesics under spherical WFR}
 	\begin{algorithmic}[1]
		\ENSURE  particles $\{x_{i}\}_{i=1}^n$ drawn from $\rho_0$, source parameter $\alpha $, $\log \rho_{1}$ up to a constant, time interval $\left[0,T \right]$, initializing network $\Phi$, hyper-parameters $\gamma_1, \gamma_2$
		\FOR{number of training iterations}
		\STATE Use $\Phi$ to solve the ODE system \eqref{discrete ODE system} to obtain position $z(x_i,t)$ and weight $w_i(t)$
        
        \STATE  Draw samples from $\rho_{1}$ and use $\Phi$ to solve the ODE system \eqref{inverse discrete ODE system} to calculate $\hat{\Phi}(t)$.
        
		\STATE Calculate cost function $J= J_{\mathrm{KL}}+\gamma_1 J_{\mathrm{SWFR}}+\gamma_2J_{\mathrm{R}}$ using $z(x_i,t)$, $w_i(t)$ and $\hat{\Phi}(t)$.
		\STATE Use ADAM optimizer optimizer to update network parameter of $\Phi$
		\ENDFOR
	\end{algorithmic}  
	\label{alg:1}
\end{algorithm}

\section{Using the geodesics to generate weighted samples}\label{sec: geodesics for sampling}

As we know, OT-Flow is developed based on previous CNF model, and can be viewed as a model using geodesics under Wasserstein distance between standard normal distribution and data distribution for sample generation. Similarly, by setting $\rho_{1}$ in our framework as a given distribution which is easy to sample (standard normal distribution for instance), we can use the learned geodesics to build a new generative model for weighted samples. In particular, this new generative model can be applied to given weighted data samples, gaining an advantage over previous CNF models. We will call our model UOT-gen (short for ``unbalanced OT-generation'') and present it in Algorithm \ref{alg:sample gen}.

\begin{algorithm}
	\renewcommand{\algorithmicrequire}{\textbf{Notation:}}
	\renewcommand{\algorithmicensure}{\textbf{Require:}}
	\caption{Framework of UOT-gen}
 	\begin{algorithmic}[1]
		\ENSURE  particles $\{x_{i}\}_{i=1}^n$ with corresponding initial weights $\{w_{i}\}_{i=1}^n$ drawn from $\rho_0$, source parameter $\alpha $, $\log \rho_{1}$ up to a constant, time interval $\left[0,T \right]$, initializing network $\Phi$, hyper-parameters $\gamma_1, \gamma_2$
		\FOR{number of training iterations}
		
        \STATE Use $\Phi$ to solve the ODE system \eqref{discrete ODE system} to obtain position $z(x_i,t)$ and weight $w_i(t)$.
        
        \STATE  Draw samples from $\rho_{1}$ and use $\Phi$ to solve the ODE system \eqref{inverse discrete ODE system} to calculate $\hat{\Phi}(t)$.
        
		\STATE Calculate cost function $J= J_{\mathrm{KL}}+\gamma_1 J_{\mathrm{SWFR}}+\gamma_2J_{\mathrm{R}}$ using $z(x_i,t)$, $w_i(t)$ and $\hat{\Phi}(t)$.
		\STATE Use ADAM optimizer to update network parameter of $\Phi$.
		\ENDFOR
		\STATE Draw equally weighted samples $\{\hat{Z}_{j}\}_{j=1}^N$ from $\rho_{1}$.
		\STATE Use well-trained $\Phi$ to solve the ODE system \eqref{inverse discrete ODE system} and get position $\hat{x}_{j}(T)$ and weight $\hat{w}_j(T)$. $\{\hat{x}_{j}(T)\}_{j=1}^N$ with weight $\{\hat{w}_{j}(T)\}_{j=1}^N$ are the generated weighted samples satisfying $\rho_{0}$.
	\end{algorithmic}  
	\label{alg:sample gen}
\end{algorithm}

The above generative model is particularly suited in the Bayesian framework, where the objective is to infer and sample the posterior distribution of unknown parameter $\theta\in\R^d$ given some observed data $\mathcal{D}$. By Bayes' theorem, the target distribution (i.e. the posterior distribution) is given by
\begin{equation}\label{eq:bayes}
    p(\theta|\mathcal{D})=\frac{p(\mathcal{D}|\theta)p(\theta)}{p(\mathcal{D})},
\end{equation}
through which one can draw samples and do density estimation for $\theta$. $p(\mathcal{D}|\theta)$ is the law of data given the parameters, or so-called likelihood of $\theta$, $p(\theta)$ is the prior distribution, and $Z:=p(\mathcal{D})$ is the normalization constant or the so-called evidence. In general, the normalization constant,
\[
Z=\int p(\mathcal{D}|\theta)p(\theta)d\theta
\]
is an intractable integral and one can only evaluate the numerator of \eqref{eq:bayes}. Often times calculating $p(\mathcal{D}|\theta)$ is costly. For instance in reinforcement learning, estimating $p(\mathcal{D}|\theta)$ requires running strategy, which can involve massive calculation.

We can sample from prior $p(\theta)$ with normalized $p(\mathcal{D}|\theta)$ as weight to generate new weighted samples satisfying posterior, i.e. we take $p$ in \eqref{integral weight} as prior and $w(\theta,0)$ as normalized $p(\mathcal{D}|\theta)$:
\begin{equation}
    \rho_{0}(\theta)=w(\theta,0)p(\theta).
\end{equation}
We set $\rho_{0}$ as weighted data mentioned above and $\rho_{1}$ as standard normal distribution. By training our UOT-gen model we can draw new weighted samples from posterior.

In most cases, the observation is continued and sequential. We are supposed to update posterior estimation and draw new posterior samples after obtaining new data. We would like to use previous estimation to help updating. An online algorithm can be developed based on UOT-gen model. If new data are observed, we can use generated samples with weight from last well-trained UOT-gen as prior and calculate corresponding likelihood. Then we multiply weight in generated samples with likelihood and normalized them as new weight. The last generated samples with new weight should satisfy updated posterior, which can be used for training to update parameters of UOT-gen model.

Suppose we have already trained an UOT-gen model for the latest posterior based on given observations. If new sequential data are observed, then the online algorithm can be as following in Algorithm \ref{alg:online}:
\begin{algorithm}
	\renewcommand{\algorithmicrequire}{\textbf{Notation:}}
	\renewcommand{\algorithmicensure}{\textbf{Require:}}
	\caption{Online algorithm for sequential data}
	\label{alg:online}
 	\begin{algorithmic}[1]
		\ENSURE  a well-trained UOT-gen model based on observed data so far
		\IF{new observations during a time period  $m \Delta t$ are made}
		\STATE drawing weighted samples from well-trained model: position $\theta_{i}$ and weight $w_i$
		\STATE Calculate corresponding likelihood $p(\mathcal{D}|\theta_{i})$ during new observation time period
		\STATE Normalize $p(\mathcal{D}|\theta_{i})w_{i}$ as new weight $\bar{w_{i}}$
		\STATE Using $\theta_{i}$ and $\bar{w_{i}}$ to train UOT-gen model with Algorithm \ref{alg:1}. The initial values of parameters in UOT-gen model can be set as the latest well-trained ones before observations are made.
		\ENDIF
	\end{algorithmic}  
\end{algorithm}

One advantage of online algorithm is that we do not need to calculate new likelihood by accumulating the whole time process. We just need to calculate the likelihood in the new time period of observation and multiply it with generated sample's weight. Meanwhile since the latest posterior is not far away from the previous one, we can set the initial values of parameters in UOT-gen model as last well-trained ones. Thus the new UOT-gen model can be trained with ease.

\section{Numerical Experiments}\label{sec:numerics}
In this section, we consider some test examples to demonstrate our deep learning framework and  UOT-gen model. We first test our method on a 1-D Gaussian mixture toy example to illustrate features of the geodesics under spherical WFR metric. The result also shows the effectiveness of the method in the sense that the particle system converges to the expected distribution precisely. We then perform density estimation on several two dimensional toy problems as done in \cite{onken2020ot}. We also apply the UOT-gen model to Bayesian problem to generate weighted samples satisfying posterior. Meanwhile, we test the online Algorithm \ref{alg:online} on the same Bayesian problem with sequential data, in which one is required to update our UOT-gen iteratively to adjust posterior estimation after obtaining new observations. Our UOT-gen and online algorithm achieve competitive prediction accuracy in numerical experiments.

 \begin{figure}[htbp]
  \centering
  \vspace{-1.5cm}
  \includegraphics[width=0.7\textwidth]{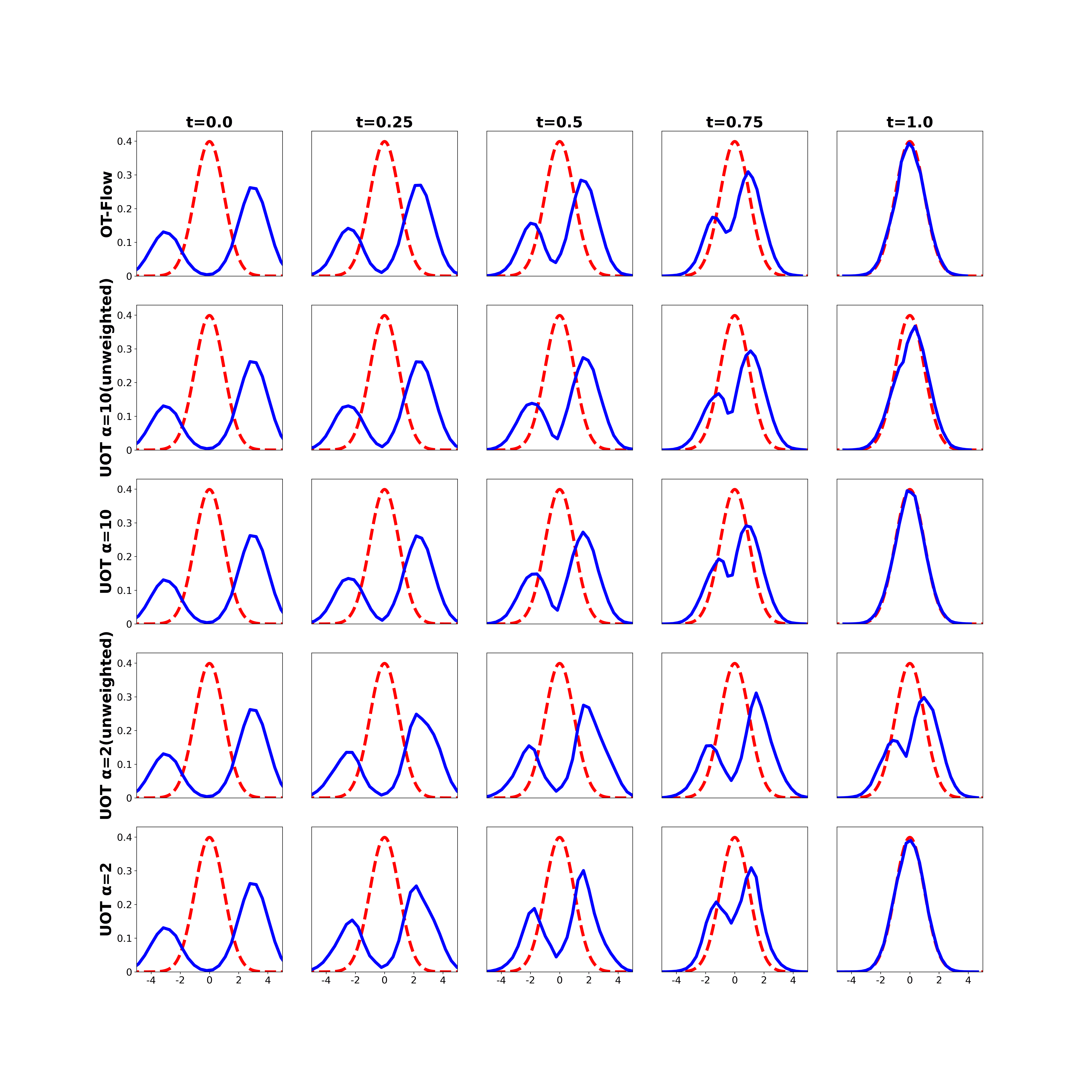}
  \vspace{-1cm}
    \caption
    {Evolution of particle distribution in forward transform with different $\alpha$. The first row reproduces results of OT-Flow model. The $2^{\mathrm{nd}}$ and $3^{\mathrm{rd}}$ rows show the particle distributions without weight (only transport is considered) and with weight respectively when $\alpha=10$. The $4^{\mathrm{th}}$ and $5^{\mathrm{th}}$ rows show the results with $\alpha=2$ and other settings are the same as the $2^{\mathrm{nd}}$ and $3^{\mathrm{rd}}$ rows.}
    \label{fig:forward_position}
\end{figure}
\begin{figure}[htbp]
  \centering
  \vspace{-2.5cm}
  \includegraphics[width=0.7\textwidth]{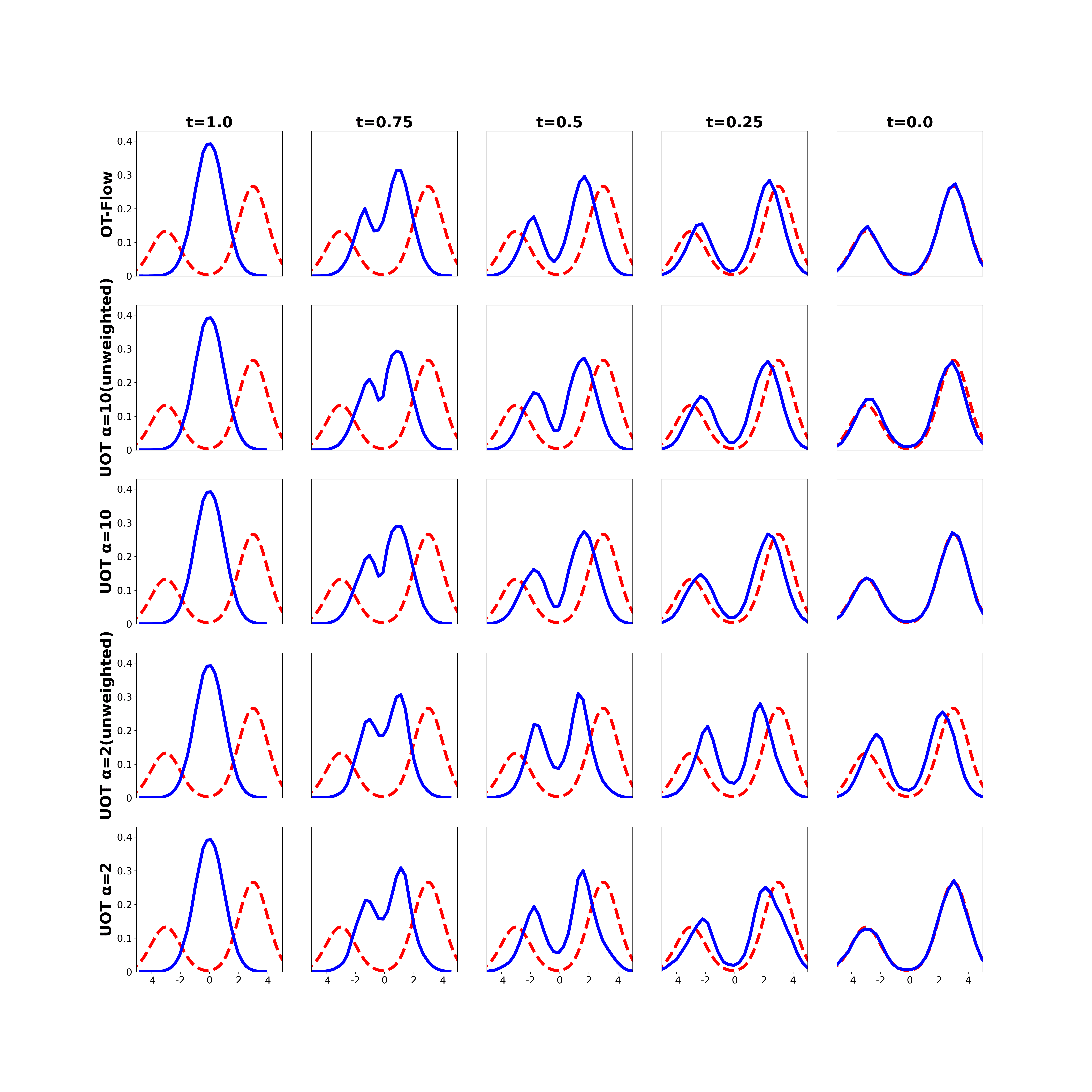} 
  \vspace{-1cm}
\caption{Evolution of particle distribution in inverse transform with different $\alpha$. Parameters and other settings are the same as in Figure \ref{fig:forward_position}.
}
    \label{fig:backward_position}
\end{figure}

\subsection*{Network architecture}
We adopt the same network architecture as in \cite{onken2020ot}. The potential function $\Phi$ is parameterized as
\[
\Phi(s;\theta)=\omega^\top N(s;\theta_N)+\frac{1}{2}s^\top(A^\top A)s+b^\top s+c,
\]
where $\theta=(\omega,\theta_N,A,b,c)$, $s=(x,t)\in\R^{d+1}$ are the imput features corresponding to space-time. $N(s;\theta_N):\R^{d+1}\to\R^m$ is a Residual Neural Network (ResNet) \cite{he2016deep}. $\theta$ consists of all the trainable weights: $\omega \in\R^m, \theta_N\in \R^p, A\in \R^{r\times (d+1)}, b\in\R^{d+1},c\in \R$. In our experiments, we set $r=d$. $A,b$ and $c$ model quadratic potentials (linear dynamics) whereas $N$ models the nonlinear dynamics \cite{ruthotto2020machine}.

\subsection*{ResNet}
For the nonlinear $N(s;\theta_N)$, we adopt a simple two layer ResNet in our experiments:
\begin{gather*}
  u_0=\sigma(K_0s+b_0),\\
  N(s;\theta_N)=u_1=u_0+\sigma(K_1u_0+b_1).
\end{gather*}

Here, $K_0\in\R^{m\times(d+1)}, K_1\in\R^{m\times m}, b_0,b_1\in\R^m$. We select
the element-wise activation function $\sigma(x)=\log(\exp(x)+\exp(-x))$ as in \cite{onken2020ot}, which is the antiderivative of the hyperbolic tangent, i.e., $\sigma'(x)=\text{tanh}(x)$. Therefore, hyperbolic tangent is the activation function of the flow $\nabla\Phi$. 

Also, we can compute the $\nabla_s\Phi(s;\theta)$ and $\text{tr}(\nabla^2\Phi(s;\theta))$ explicitly using the methods in \cite{onken2020ot}. For the forward propagation of flow, we use Runge-Kutta 4 with equidistant time steps to solve \eqref{eq:original ODE} and time integrals in cost function $J$.

\subsection*{Hyper-parameters}
In all experiments, we consider the time interval as $[0,1]$, i.e. $T=1$. The total time step is set as 8. The width of neural network $m$ is 32 in the experiments. We take $\gamma_1=\gamma_2=0.01$ as a default choice to make a balance for terms in the cost function.

\subsection{1-D Gaussian mixture}
As a first example, we use the 1-D Gaussian mixture problem for our deep learning framework.

\begin{figure}[htbp]
    \centering
    \vspace{-0.3cm}
    \includegraphics[width=0.8\textwidth]{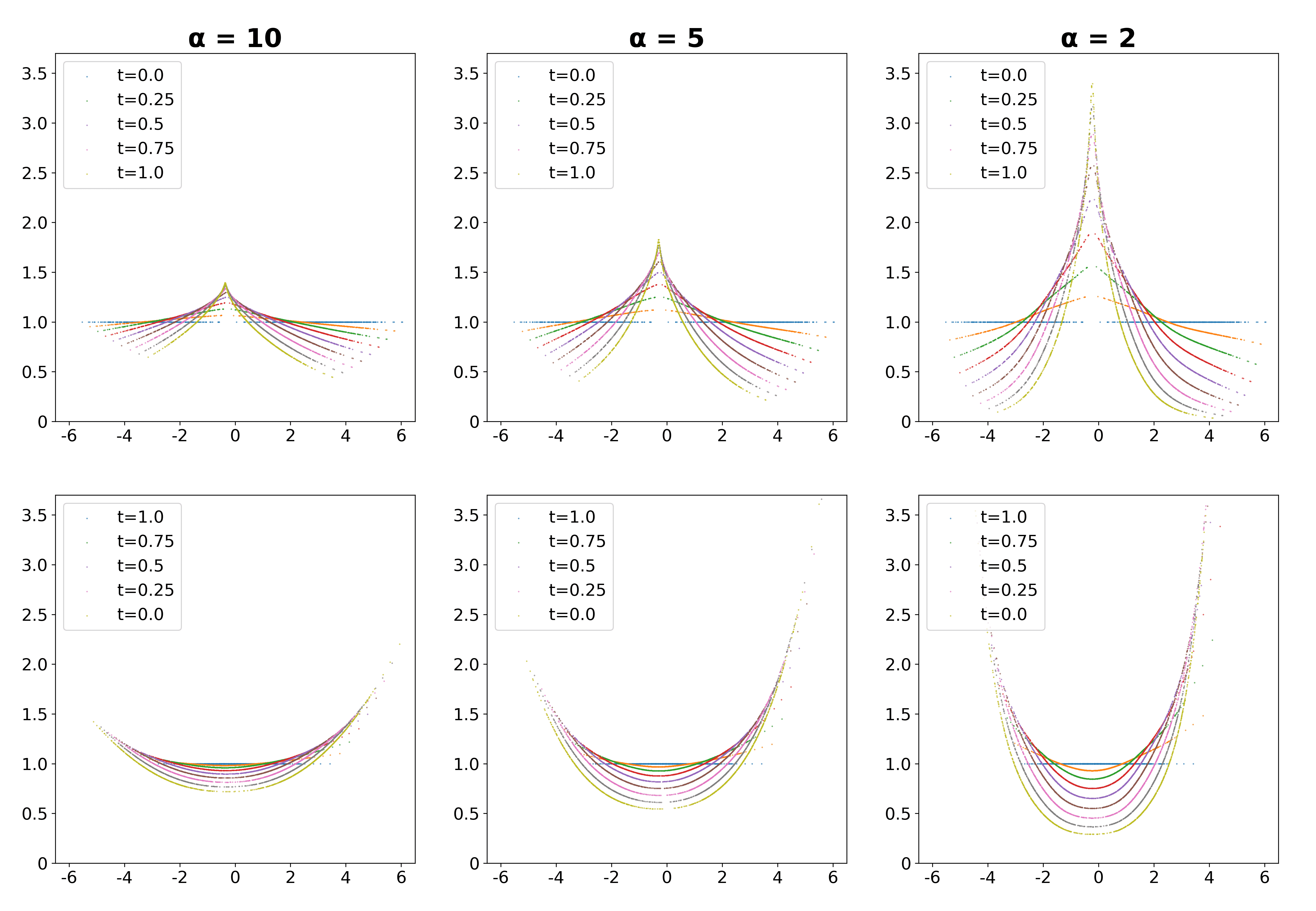}
    \vspace{-0.3cm}
\caption{Evolution of particle weights with different $\alpha$. In all figures, x-axis indicates the current position of particles and y-axis represents the weight of particles. The first and second row compare the weight change over time in forward and inverse flow respectively.}
\label{fig:weight}
\end{figure}

We sample from the Gaussian mixture
\begin{gather}\label{rho_0_1d}
\rho_0(x)= \frac{1}{3}\cdot\frac{1}{\sqrt{2\pi}}e^{-(x+3)^2/2}+\frac{2}{3}\cdot\frac{1}{\sqrt{2\pi}}e^{-(x-3)^2/2}
\end{gather}
 and use our framework to learn the geodesics from $\rho_0$ to standard normal distribution $\rho_1$.

Figure \ref{fig:forward_position} and Figure \ref{fig:backward_position} illustrate the evolution of particle distribution in the geodesics for forward and inverse transformations respectively, against different parameter $\alpha$. In all figures, red dash curves indicate target density function whereas blue curves are distributions of particles at each time $t$. By Figure \ref{fig:forward_position}, our deep learning framework shows compelling results with $\alpha=10,2$ (the $3^{\mathrm{rd}}$ and $5^{\mathrm{th}}$ rows) compared to OT-Flow (the $1^{\mathrm{st}}$ row) in the sense that our framework captures the target distribution $\rho_1$ precisely. The figures in $2^{\mathrm{nd}}$ and $4^{\mathrm{th}}$ rows show the particle distribution evolution without weight change. The differences between the $2^{\mathrm{nd}}$ and $3^{\mathrm{rd}}$ rows (as well as the $4^{\mathrm{th}}$ and $5^{\mathrm{th}}$ rows) show the effect brought by weight change. Compared with OT-Flow, the transportation of particles in our UOT-gen is weaker. The particles seem to be ``lazier'' for that they move to the location close to target distribution and the weight change compensate the remained. Compared to $\alpha=10$, the result with $\alpha=2$ shows a stronger birth-death effect (weight change) whereas weaker transport effect, which is consistent with the guiding PDE \eqref{eq:weight_PDE}. Figure \ref{fig:backward_position} shows the results of inverse transformation (starting with $\rho_1$) with other settings being the same as in Figure \ref{fig:forward_position}, which validates the efficiency of sample generation of our model.

\begin{figure}[htbp]
\centering
\vspace{-0.5cm}
\subfigure[Forward trajectories]{
\begin{minipage}[t]{0.45\linewidth}
\centering
\includegraphics[width=6cm]{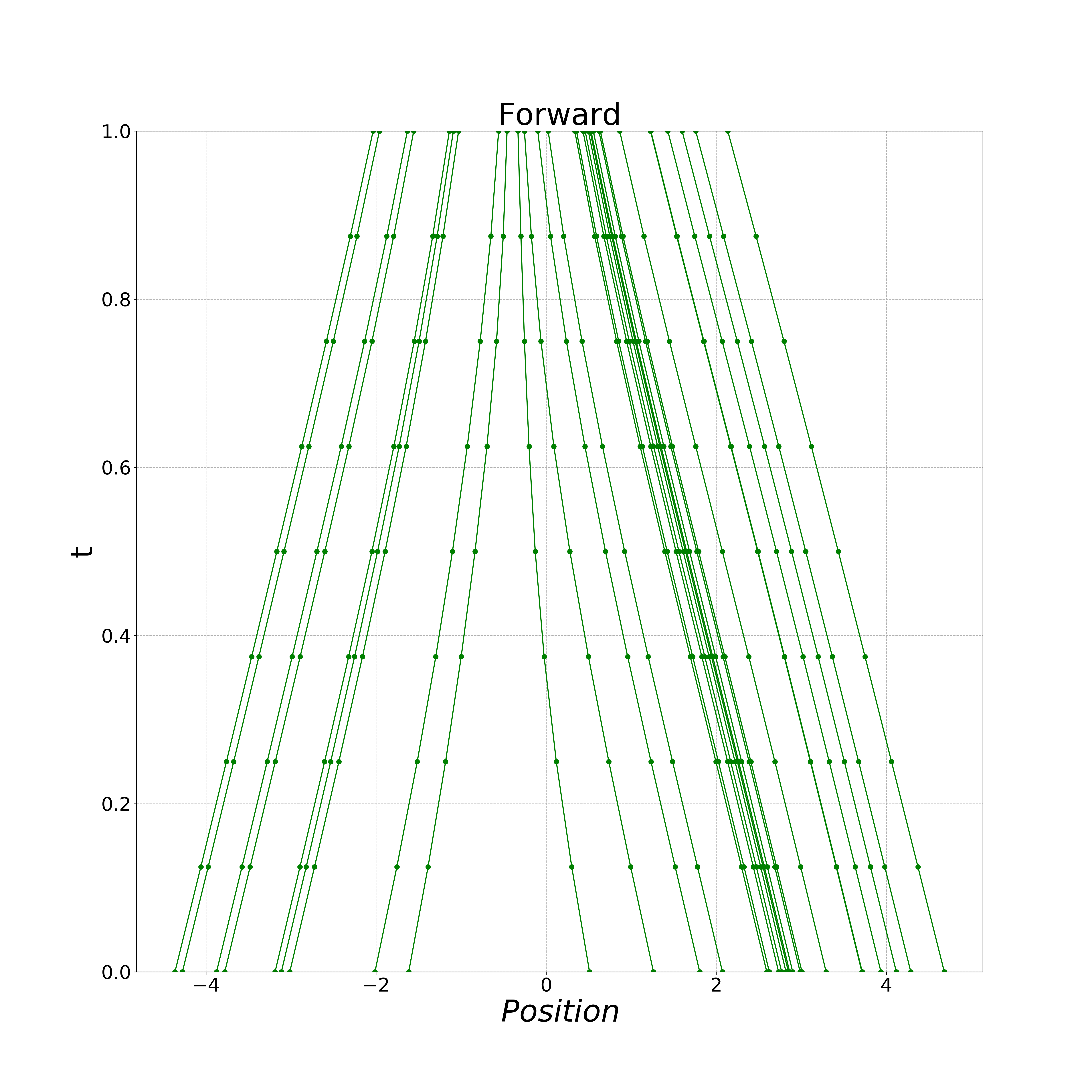}
\end{minipage}
}
\subfigure[Inverse trajectories]{
\begin{minipage}[t]{0.45\linewidth}
\centering
\includegraphics[width=6cm]{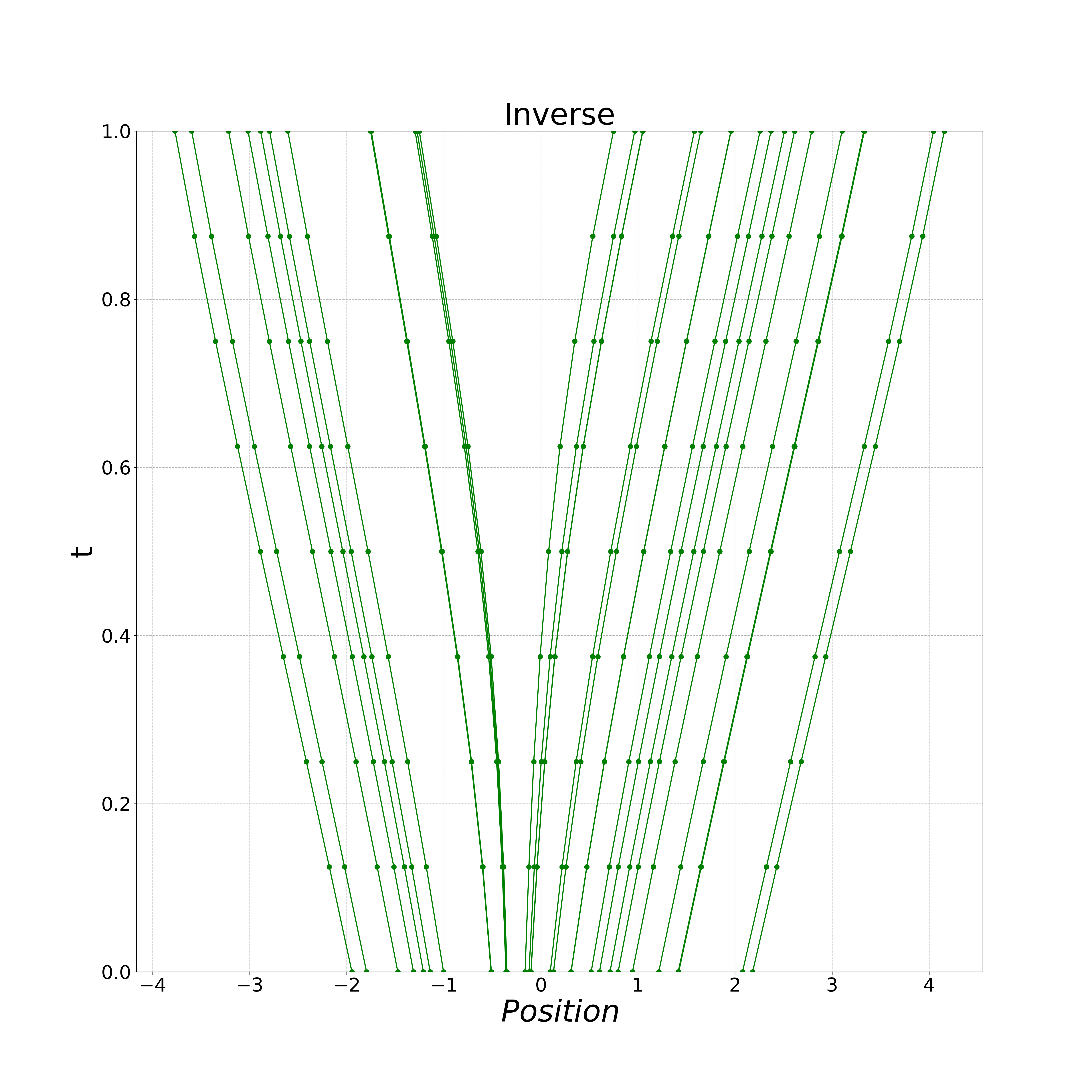}

\end{minipage}
}
\centering
\caption{Trajectories of 30 random particles in our framework. The x-axis indicates the position of particles and y-axis indicates the time interval. (a) initial positions are drawn from $\rho_{0}$. (b) initial positions are drawn from $\rho_{1}$ }

\label{fig:forward_path}
\end{figure}


Then, we plot the weight change over time with different $\alpha$ in forward and inverse transformation respectively in Figure \ref{fig:weight}. The different weight changing trends between particles verify the property of unbalanced optimal transport. Especially the particles in the region corresponding to high target density have greater weight, in the sense that the weight change in UOT can play similar role of transport in the whole transformation. We also observe that as $\alpha$ decreases, the mass creation/destruction effect becomes more evident which agrees with \eqref{eq:weight_PDE}. To see the particle trajectories more clearly, we randomly choose each 30 particles from $\rho_0$ and $\rho_{1}$ to plot their trajectories for forward and inverse transform in Figure \ref{fig:forward_path}. The slope of trajectories can roughly indicate the velocity change, which is consistent with the relationship between particle velocity and weight \eqref{eq:veolocity and weight}. Particles with increasing weight slow down when being about to reach their destinations. The velocity of particles with decreasing weight increases, which indicates that they tend to leave present location.

Summarily, unbalanced optimal transport introduce a more general particle transformation involved both location and weight. The coefficient $\alpha$ controls the effect of mass creation/destruction, by adjusting which we can decide how weight change influences the whole transformation.

\subsection*{The effects of $\gamma_1$ and $\alpha$ on $J_{\mathrm{SWFR}}$}

In this part we set $\gamma_2=0$ for convenience to see how $J_{\mathrm{SWFR}}$ varies against different $\gamma_1$ and $\alpha$. 

We fix $\alpha=1$ and draw 2048 random particles following the distribution $\rho_0$ as training set. Figure \ref{fig:compare_alpha_gamma} (a) shows $J_{\mathrm{SWFR}}$ against different $\gamma_1$. We observe that as $\gamma_1$ goes to zero, $J_{\mathrm{SWFR}}$ increases and converges to some limit which agrees with Proposition \ref{prop:small_gamma}.

Furthermore, Proposition \ref{prop:small_gamma} suggests that if we take a sufficiently small $\gamma_1$, the model can give a good approximation to the geodesics under spherical WFR metric. In this small $\gamma_1$ regime, as training process goes on, one can expect $\mathrm{D}_{\mathrm{KL}}[\rho_0(x)\| \widetilde{\rho}(x,T)] \approx 0$ and $J_{\mathrm{SWFR}}\approx \mathrm{d}_{\mathrm{SWFR},\alpha}^2(\rho_0, \rho_1)$. In Figure \ref{fig:compare_alpha_gamma} (b), we compare $J_{\mathrm{SWFR}}$ against different $\alpha$ when training reaches the equilibrium state. The experiment is repeated independently for $5$ times. We can find that as $\alpha$ increases, $J_{\mathrm{SWFR}}$ has a slower increasing rate, which is expected to converge to $W_2^2(\rho_0,\rho_1)$ according to \eqref{eq:weight_PDE}. To make a comparison, we use the same training set to compute $\mathbb{E}_{\rho_0} L(x,T)$ in \eqref{eq:OT-Flow} for OT-Flow model, which turns out to be 2.71 and larger than $J_{\mathrm{SWFR}}$ in our framework.

\begin{figure}[htbp]
\centering
\vspace{-0.5cm}
\subfigure[$J_{\mathrm{SWFR}}$ versus $\gamma_1$]{
\begin{minipage}[t]{0.45\linewidth}
\centering
\includegraphics[width=6cm]{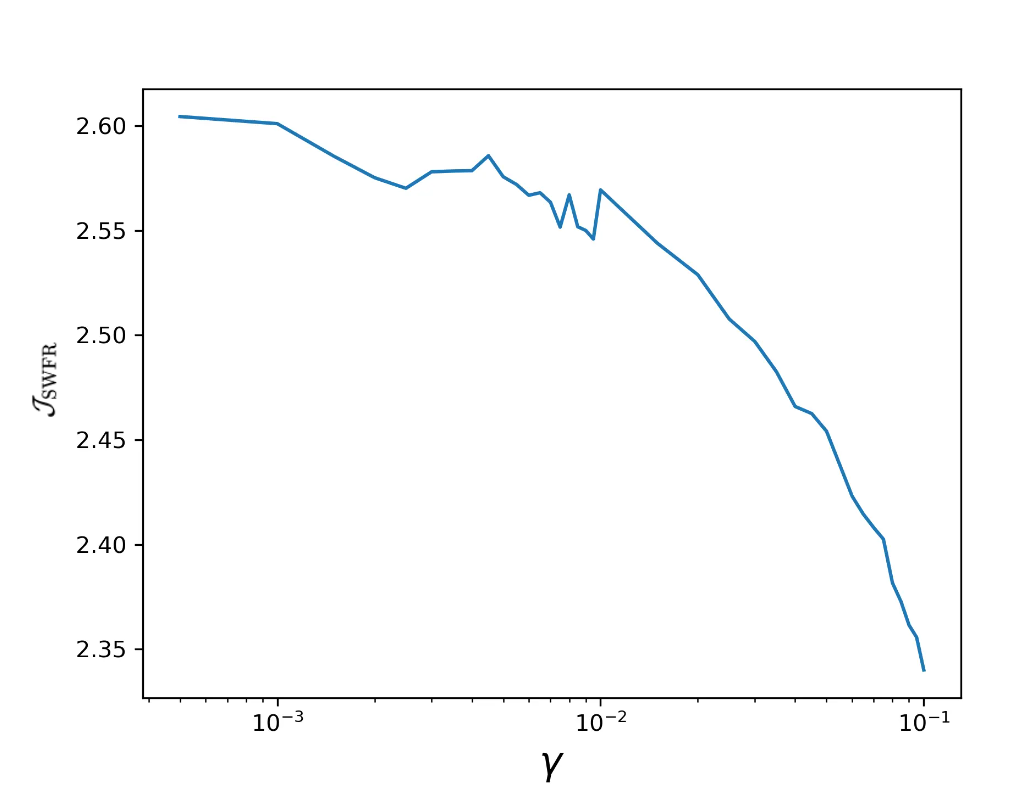}
\end{minipage}
}
\subfigure[$J_{\mathrm{SWFR}}$ versus $\alpha$]{
\begin{minipage}[t]{0.45\linewidth}
\centering
\includegraphics[width=6cm]{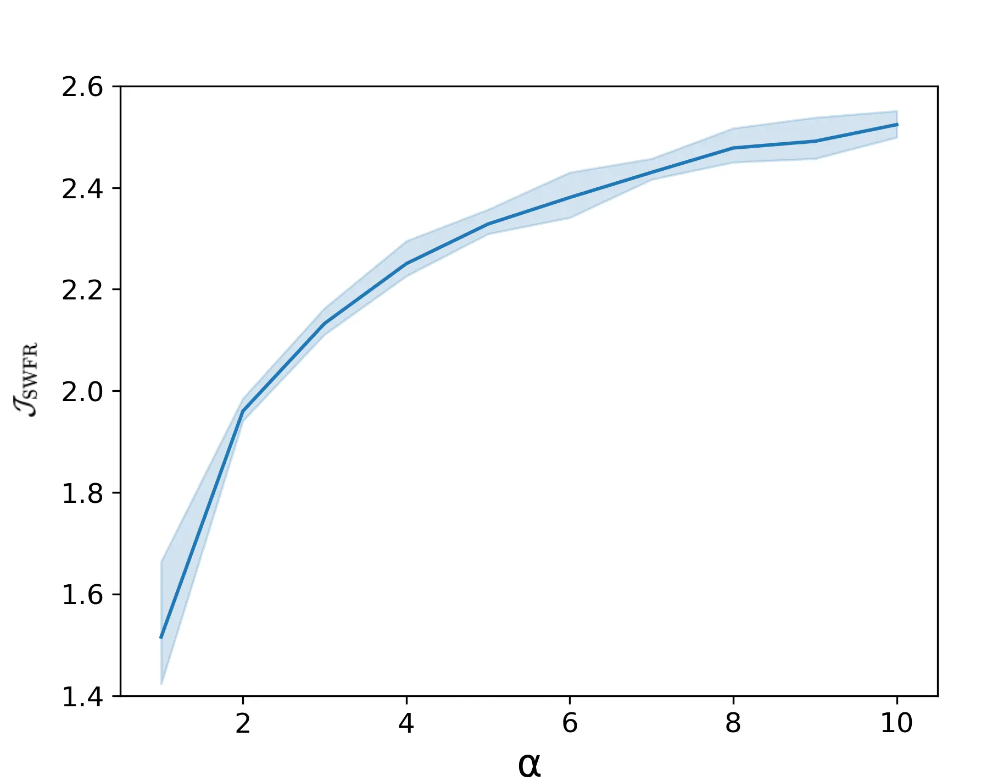}

\end{minipage}
}
\centering
\caption{$J_{\mathrm{SWFR}}$ (squared spherical WFR distance) with different $\gamma_1$ or
$\alpha$}
\label{fig:compare_alpha_gamma}
\end{figure}

\subsection{2-D toy problems} 

In this section, we demonstrate the accuracy and generation quality of our model on two 2-D toy problems also used in \cite{onken2020ot}. We use trained $\Phi$ to simulate an inverse transformation (sampling from 2-D standard normal distribution and pushing particles back to the data distribution), from which one can compare the similarity of original data with generative distribution. The high similarity indicates that our model can generate weighted samples to approximate $\rho_0$ with satisfactory accuracy even though $\rho_0$ has separate supports. Numerical results in type of distribution heatmaps are illustrated in Figure \ref{fig:2d}. The transformation we mentioned refers to considering both location and weight change of particles.

\begin{figure}[htbp]
\centering
\subfigure[8 Gaussians]{
\begin{minipage}[t]{0.48\linewidth}
\centering
\includegraphics[width=5.5cm]{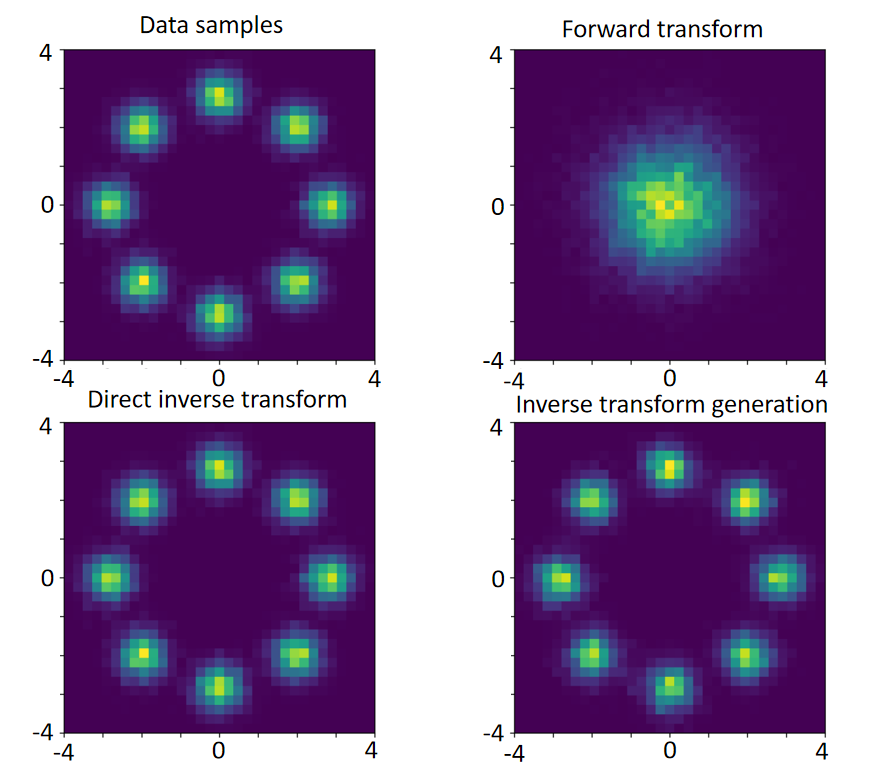}
\end{minipage}
}
\subfigure[Moons]{
\begin{minipage}[t]{0.48\linewidth}
\centering
\includegraphics[width=5.5cm]{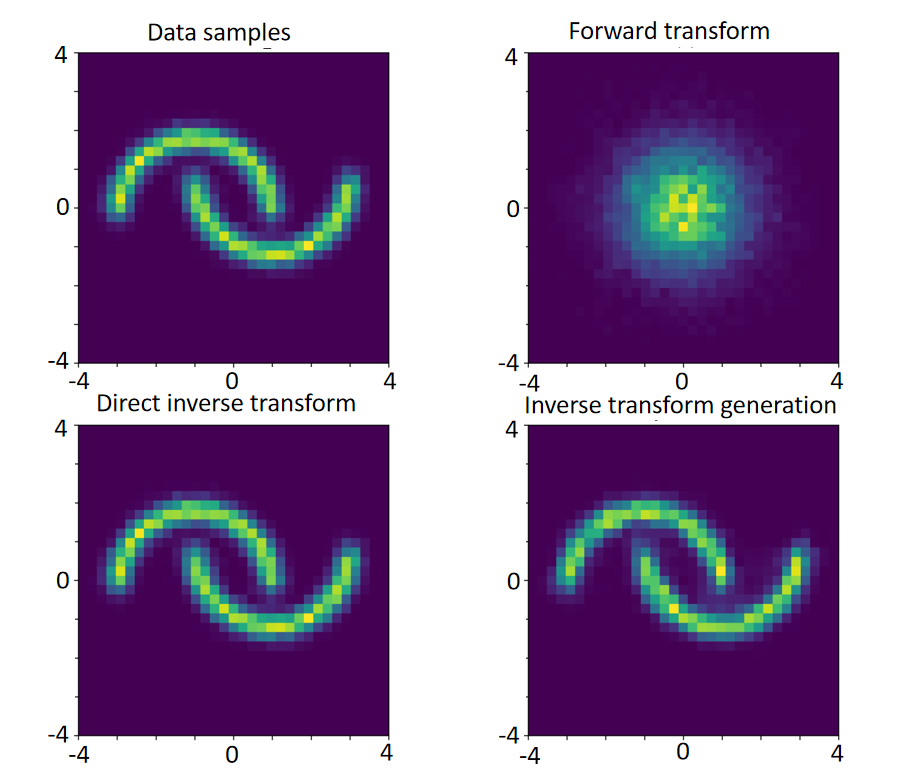}
\end{minipage}%
}%

\centering
\caption{Performance heatmaps for 2-D toy problems. In each example: (1) top left: samples from the unknown distribution $\rho_0$ (2) top right: the forward transformation of particles from $\rho_{0}$ (3) bottom left: direct inverse transformation of weighted particles in top right figure. (4) bottom right: samples generated by inverse transformation from standard normal distribution.
}
\label{fig:2d}
\end{figure}

\subsection{Generating Weighted Samples from Bayesian Posterior}





We apply our UOT-gen model to a Bayesian problem \cite{wu2022ensemble}. Consider Bernoulli equation:
 
\begin{equation}\label{Bernoulli}
\frac{d v}{d t}-v=-v^{3}, \quad v(0)=x
\end{equation}
whose analytic solution is given as follows:
\begin{equation}
v(t)=G(x, t)=x\left(x^{2}+\left(1-x^{2}\right) e^{-2 t}\right)^{-1 / 2}.
\end{equation}
Suppose we can collect the observations of $v(t)$ at different time $t=n \Delta t, n=1,2,\cdots,T$. Assume the observation will introduce a zero-mean Gaussian noise with standard deviation $\sigma$. We are supposed to estimate the initial position from the sequential observed data. This model is a typical problem for data assimilation methods since it exhibits certain non-Gaussian behavior \cite{apte2007sampling}. In our experiment we set $T=50, \Delta t =1, \sigma=0.4$. The ground truth initial position is $x=0.2$. First we show a sequence of observed data and analytic solution in Fig \ref{Fig.main1}.



\begin{figure}[htbp]
\centering
\begin{minipage}[t]{0.45\textwidth}
\centering
\includegraphics[width=6cm]{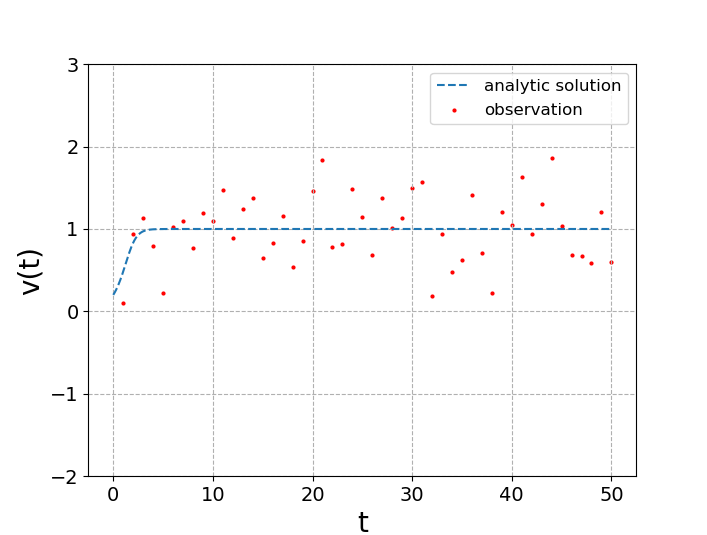}
\caption{Analytic solution and simulated observation for $\sigma=0.4$.}
\label{Fig.main1}
\end{minipage}
\hspace{10mm}
\begin{minipage}[t]{0.45\textwidth}
\centering
\includegraphics[width=6cm]{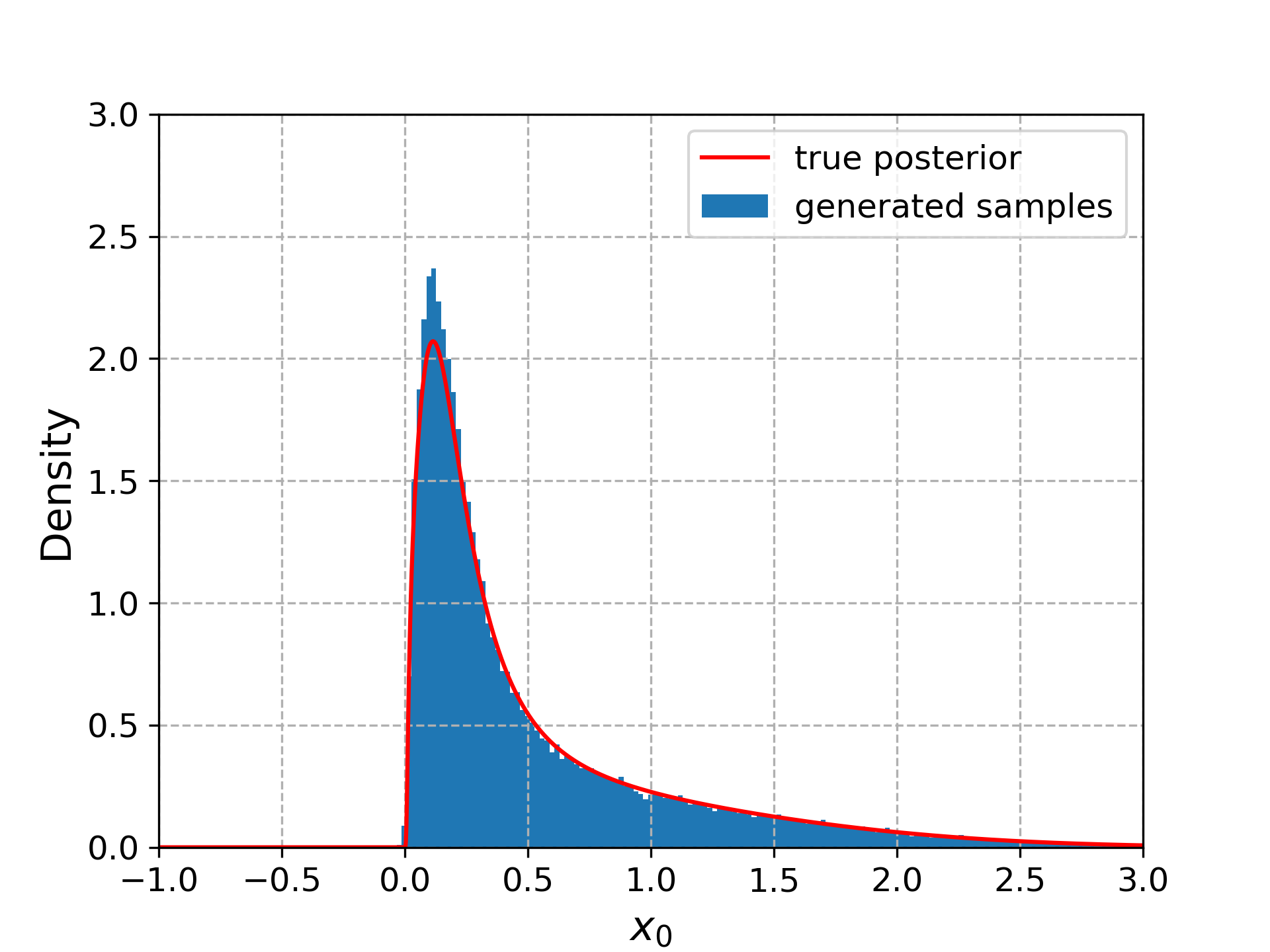}
\caption{Comparison of histgram of generated samples and true posterior.}
\label{Fig.main2}
\end{minipage}
\end{figure}

We can use Bayesian formula to get weighted samples satisfying posterior. Without loss of generality, we set prior $p(w)$ as $\mathcal{N}(0.5,1)$. Then we draw samples from prior and calculate corresponding likelihood $p(\mathcal{D}|w)$, which will be normalized as weights. $D_{t}$ denotes the observation data at time $t$, then the exact $p(\mathcal{D}|w)$ is 

\begin{equation}
    \begin{aligned}
        p(\mathcal{D}|x_{0})=\prod_{t=1}^{50}\frac{1}{\sqrt{2 \pi \sigma^{2}}} \exp\left(-\frac{(D_{t}-G(x_{0},t))^{2}}{2\sigma^{2}}\right).
    \end{aligned}
\end{equation}
Then posterior is
\begin{equation}
    \begin{aligned}
        p(x_{0}|\mathcal{D})=\frac{1}{p(\mathcal{D})}\frac{1}{\sqrt{2 \pi }} \exp\left(-\frac{(x_{0}-0.5)^{2}}{2}\right)\prod_{t=1}^{50}\frac{1}{\sqrt{2 \pi \sigma^{2}}} \exp\left(-\frac{(D_{t}-G(x_{0},t))^{2}}{2\sigma^{2}}\right).
    \end{aligned}
\end{equation}

We plot the unnormalized posterior for the convenience for comparison with weighted samples generated by our UOT-gen model later. Our UOT-gen model is supposed to use given weighted samples to generative weighted sample satisfying posterior. We draw 2048 samples from the prior in our experiment. We set above 2048 samples with corresponding weights as discrete $\rho_{0}$, $\mathcal{N}(0,1)$ as $\rho_{1}$ to train our UOT-gen model. Fig \ref{Fig.main2} shows the comparison of the new generated samples with true posterior. The histgram of new generated samples fit posterior well.


We also apply the online algorithm \ref{alg:online} on the above Bernoulli example. We set $\Delta t=0.1, m=5$ with other settings unchanged, which means we collected observations in each $5\Delta t$ and then update UOT-gen iteratively. Fig \ref{Fig.main5} shows the comparison of true posterior with generated samples from updated UOT-gen in four different time points. The well-fitted density estimations indicate that our online algorithm can generate weighted samples satisfying updated posterior after obtaining new observations.

\begin{figure}[H] 
\centering 
\includegraphics[width=1\linewidth]{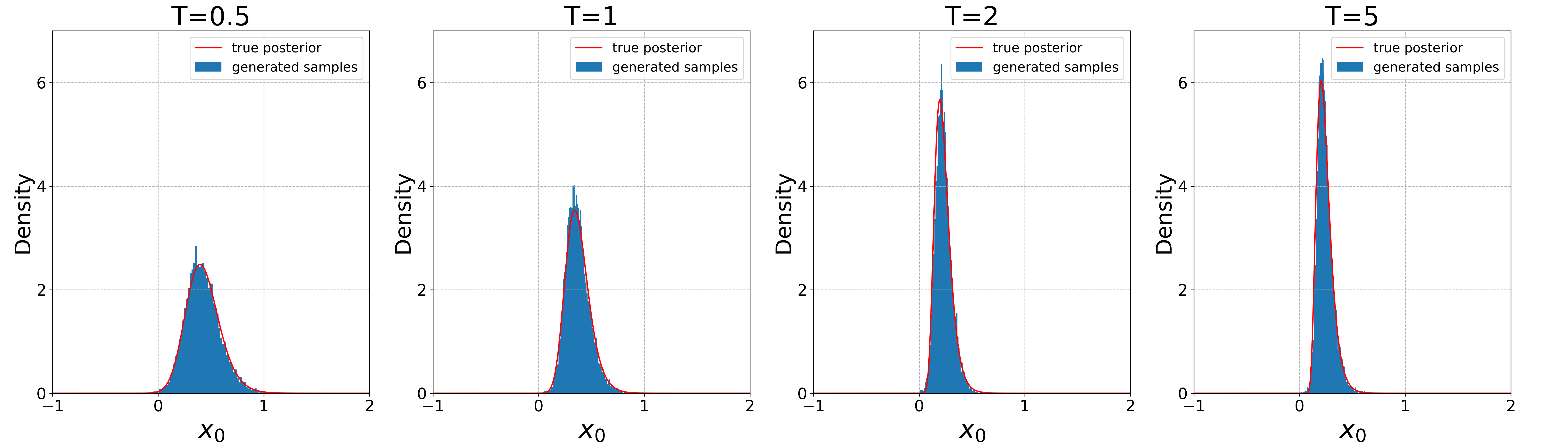} 
\caption{Using online algorithm to generate weighted samples for Bernoulli example above.} 
\label{Fig.main5} 
\end{figure}

\section{Conclusion and discussion}\label{sec:dis}

In summary, we have developed a deep learning framework to compute the geodesics under the spherical WFR metric based on a Benamou-Brenier type dynamic formulation. A KL divergence term based on inverse mapping was introduced into cost function as a soft boundary constraint to tackle the problem arising from weight change. Also, we leveraged the relationship between particle velocity and weight to introduce a new regularization term into our model. Then we demonstrated that the learned geodesics can be used to generate weighted samples from some target distribution. Numerical results have shown the accuracy and efficiency of our model, especially beneficial for applications with given weighted samples in Bayesian inference.

Our framework is promising in dealing with weighted data, which can be costly or even infeasible for previous flow models. Future topics include applying the UOT-gen model in other fields with weighted samples, such as general Bayesian inference tasks and new drug molecule design, where one may hope to keep the most already known structures.

\section{Acknowledgement}
This work is partially supported by the National Key R\&D Program of China No. 2020YFA0712000 and No. 2021YFA1002800. The work of L. Li was partially supported by Shanghai Municipal Science and Technology Major Project 2021SHZDZX0102, NSFC 11901389 and 12031013, and Shanghai Science and Technology Commission Grant No. 21JC1402900.


\normalem
\bibliographystyle{ieeetr}
\bibliography{WFR}

\begin{thebibliography}{10}

\bibitem{liu2016stein}
Q.~Liu and D.~Wang, ``Stein variational gradient descent: A general purpose
  {Bayesian} inference algorithm,'' {\em Advances in neural information
  processing systems}, vol.~29, 2016.

\bibitem{liu2017stein}
Q.~Liu, ``Stein variational gradient descent as gradient flow,'' {\em Advances
  in neural information processing systems}, vol.~30, 2017.

\bibitem{arjovsky2017wasserstein}
M.~Arjovsky, S.~Chintala, and L.~Bottou, ``Wasserstein generative adversarial
  networks,'' in {\em International conference on machine learning},
  pp.~214--223, PMLR, 2017.

\bibitem{gulrajani2017improved}
I.~Gulrajani, F.~Ahmed, M.~Arjovsky, V.~Dumoulin, and A.~Courville, ``Improved
  training of {Wasserstein GANs},'' {\em arXiv preprint arXiv:1704.00028},
  2017.

\bibitem{onken2020ot}
D.~Onken, S.~W. Fung, X.~Li, and L.~Ruthotto, ``{OT-flow}: Fast and accurate
  continuous normalizing flows via optimal transport,'' {\em arXiv preprint
  arXiv:2006.00104}, 2020.

\bibitem{salimans2018improving}
T.~Salimans, H.~Zhang, A.~Radford, and D.~Metaxas, ``Improving {GANs} using
  optimal transport,'' {\em arXiv preprint arXiv:1803.05573}, 2018.

\bibitem{liero2018optimal}
M.~Liero, A.~Mielke, and G.~Savar{\'e}, ``Optimal entropy-transport problems
  and a new {Hellinger--Kantorovich} distance between positive measures,'' {\em
  Inventiones mathematicae}, vol.~211, no.~3, pp.~969--1117, 2018.

\bibitem{chizat2018interpolating}
L.~Chizat, G.~Peyr{\'e}, B.~Schmitzer, and F.-X. Vialard, ``An interpolating
  distance between optimal transport and {Fisher-Rao} metrics,'' {\em
  Foundations of Computational Mathematics}, vol.~18, no.~1, pp.~1--44, 2018.

\bibitem{chizat2018unbalanced}
L.~Chizat, G.~Peyr{\'e}, B.~Schmitzer, and F.-X. Vialard, ``Unbalanced optimal
  transport: Dynamic and {Kantorovich} formulations,'' {\em Journal of
  Functional Analysis}, vol.~274, no.~11, pp.~3090--3123, 2018.

\bibitem{laschos2019geometric}
V.~Laschos and A.~Mielke, ``Geometric properties of cones with applications on
  the {Hellinger--Kantorovich} space, and a new distance on the space of
  probability measures,'' {\em Journal of Functional Analysis}, vol.~276,
  no.~11, pp.~3529--3576, 2019.

\bibitem{kondratyev2019spherical}
S.~Kondratyev and D.~Vorotnikov, ``Spherical {Hellinger--Kantorovich} gradient
  flows,'' {\em SIAM Journal on Mathematical Analysis}, vol.~51, no.~3,
  pp.~2053--2084, 2019.

\bibitem{ambrosio2005gradient}
L.~Ambrosio, N.~Gigli, and G.~Savar{\'e}, {\em Gradient flows: in metric spaces
  and in the space of probability measures}.
\newblock Springer Science \& Business Media, 2005.

\bibitem{villani2009optimal}
C.~Villani, {\em Optimal transport: old and new}, vol.~338.
\newblock Springer, 2009.

\bibitem{santambrogio2015optimal}
F.~Santambrogio, ``Optimal transport for applied mathematicians,'' {\em
  Birk{\"a}user, NY}, vol.~55, no.~58-63, p.~94, 2015.

\bibitem{wang2020robust}
Z.~Wang, D.~Zhou, M.~Yang, Y.~Zhang, C.~Rao, and H.~Wu, ``Robust document
  distance with {Wasserstein-Fisher-Rao} metric,'' in {\em Asian Conference on
  Machine Learning}, pp.~721--736, PMLR, 2020.

\bibitem{zhou2018wasserstein}
D.~Zhou, J.~Chen, H.~Wu, D.~Yang, and L.~Qiu, ``The {Wasserstein-Fisher-Rao}
  metric for waveform based earthquake location,'' {\em arXiv preprint
  arXiv:1812.00304}, 2018.

\bibitem{schiebinger2019optimal}
G.~Schiebinger, J.~Shu, M.~Tabaka, B.~Cleary, V.~Subramanian, A.~Solomon,
  J.~Gould, S.~Liu, S.~Lin, P.~Berube, {\em et~al.}, ``Optimal-transport
  analysis of single-cell gene expression identifies developmental trajectories
  in reprogramming,'' {\em Cell}, vol.~176, no.~4, pp.~928--943, 2019.

\bibitem{goodfellow2014generative}
I.~Goodfellow, J.~Pouget-Abadie, M.~Mirza, B.~Xu, D.~Warde-Farley, S.~Ozair,
  A.~Courville, and Y.~Bengio, ``Generative adversarial nets,'' {\em Advances
  in neural information processing systems}, vol.~27, 2014.

\bibitem{kingma2013auto}
D.~P. Kingma and M.~Welling, ``Auto-encoding variational {Bayes},'' {\em arXiv
  preprint arXiv:1312.6114}, 2013.

\bibitem{kingma2019introduction}
D.~P. Kingma and M.~Welling, ``An introduction to variational autoencoders,''
  {\em arXiv preprint arXiv:1906.02691}, 2019.

\bibitem{rezende2015variational}
D.~Rezende and S.~Mohamed, ``Variational inference with normalizing flows,'' in
  {\em International conference on machine learning}, pp.~1530--1538, PMLR,
  2015.

\bibitem{tabak2010density}
E.~G. Tabak and E.~Vanden-Eijnden, ``Density estimation by dual ascent of the
  log-likelihood,'' {\em Communications in Mathematical Sciences}, vol.~8,
  no.~1, pp.~217--233, 2010.

\bibitem{chen2018neural}
R.~T. Chen, Y.~Rubanova, J.~Bettencourt, and D.~K. Duvenaud, ``Neural ordinary
  differential equations,'' {\em Advances in neural information processing
  systems}, vol.~31, 2018.

\bibitem{vaswani2017attention}
A.~Vaswani, N.~Shazeer, N.~Parmar, J.~Uszkoreit, L.~Jones, A.~N. Gomez,
  {\L}.~Kaiser, and I.~Polosukhin, ``Attention is all you need,'' {\em Advances
  in neural information processing systems}, vol.~30, 2017.

\bibitem{devlin2018bert}
J.~Devlin, M.-W. Chang, K.~Lee, and K.~Toutanova, ``Bert: Pre-training of deep
  bidirectional transformers for language understanding,'' {\em arXiv preprint
  arXiv:1810.04805}, 2018.

\bibitem{apte2007sampling}
A.~Apte, M.~Hairer, A.~Stuart, and J.~Voss, ``Sampling the posterior: An
  approach to non-{Gaussian} data assimilation,'' {\em Physica D: Nonlinear
  Phenomena}, vol.~230, no.~1-2, pp.~50--64, 2007.

\bibitem{wu2022ensemble}
J.~Wu, L.~Wen, P.~L. Green, J.~Li, and S.~Maskell, ``Ensemble {Kalman} filter
  based sequential {Monte Carlo} sampler for sequential {Bayesian} inference,''
  {\em Statistics and Computing}, vol.~32, no.~1, pp.~1--14, 2022.

\bibitem{monge1781memoire}
G.~Monge, ``M{\'e}moire sur la th{\'e}orie des d{\'e}blais et des remblais,''
  {\em Mem. Math. Phys. Acad. Royale Sci.}, pp.~666--704, 1781.

\bibitem{rubner2000earth}
Y.~Rubner, C.~Tomasi, and L.~J. Guibas, ``The {Earth Mover's} distance as a
  metric for image retrieval,'' {\em International journal of computer vision},
  vol.~40, no.~2, pp.~99--121, 2000.

\bibitem{peyre2019computational}
G.~Peyr{\'e}, M.~Cuturi, {\em et~al.}, ``Computational optimal transport: With
  applications to data science,'' {\em Foundations and Trends{\textregistered}
  in Machine Learning}, vol.~11, no.~5-6, pp.~355--607, 2019.

\bibitem{yang2020predicting}
K.~D. Yang, K.~Damodaran, S.~Venkatachalapathy, A.~C. Soylemezoglu,
  G.~Shivashankar, and C.~Uhler, ``Predicting cell lineages using autoencoders
  and optimal transport,'' {\em PLoS computational biology}, vol.~16, no.~4,
  p.~e1007828, 2020.

\bibitem{galichon2018optimal}
A.~Galichon, {\em Optimal transport methods in economics}.
\newblock Princeton University Press, 2018.

\bibitem{galichon2021survey}
A.~Galichon, ``A survey of some recent applications of optimal transport
  methods to econometrics,'' {\em arXiv preprint arXiv:2102.01716}, 2021.

\bibitem{pele2008linear}
O.~Pele and M.~Werman, ``A linear time histogram metric for improved sift
  matching,'' in {\em European conference on computer vision}, pp.~495--508,
  Springer, 2008.

\bibitem{rubner1997earth}
Y.~Rubner, L.~J. Guibas, and C.~Tomasi, ``The {Earth Mover's} distance,
  multi-dimensional scaling, and color-based image retrieval,'' in {\em
  Proceedings of the ARPA image understanding workshop}, vol.~661, p.~668,
  1997.

\bibitem{kondratyev2016new}
S.~Kondratyev, L.~Monsaingeon, and D.~Vorotnikov, ``A new optimal transport
  distance on the space of finite {Radon} measures,'' {\em Advances in
  Differential Equations}, vol.~21, no.~11/12, pp.~1117--1164, 2016.

\bibitem{brenier2020optimal}
Y.~Brenier and D.~Vorotnikov, ``On optimal transport of matrix-valued
  measures,'' {\em SIAM Journal on Mathematical Analysis}, vol.~52, no.~3,
  pp.~2849--2873, 2020.

\bibitem{evans1983introduction}
L.~C. Evans, ``An introduction to mathematical optimal control theory version
  0.2,'' {\em Lecture notes available at http://math. berkeley. edu/\~{}
  evans/control. course. pdf}, 1983.

\bibitem{johnson2019framework}
R.~Johnson and T.~Zhang, ``A framework of composite functional gradient methods
  for generative adversarial models,'' {\em IEEE transactions on pattern
  analysis and machine intelligence}, vol.~43, no.~1, pp.~17--32, 2019.

\bibitem{liu2016kernelized}
Q.~Liu, J.~Lee, and M.~Jordan, ``A kernelized {Stein} discrepancy for
  goodness-of-fit tests,'' in {\em International conference on machine
  learning}, pp.~276--284, PMLR, 2016.

\bibitem{chwialkowski2016kernel}
K.~Chwialkowski, H.~Strathmann, and A.~Gretton, ``A kernel test of goodness of
  fit,'' in {\em International conference on machine learning}, pp.~2606--2615,
  PMLR, 2016.

\bibitem{gorham2017measuring}
J.~Gorham and L.~Mackey, ``Measuring sample quality with kernels,'' in {\em
  International Conference on Machine Learning}, pp.~1292--1301, PMLR, 2017.

\bibitem{hu2018stein}
T.~Hu, Z.~Chen, H.~Sun, J.~Bai, M.~Ye, and G.~Cheng, ``Stein neural sampler,''
  {\em arXiv preprint arXiv:1810.03545}, 2018.

\bibitem{braides2002gamma}
A.~Braides {\em et~al.}, {\em Gamma-convergence for Beginners}, vol.~22.
\newblock Clarendon Press, 2002.

\bibitem{de1993new}
E.~De~Giorgi, ``New problems on minimizing movements,'' {\em Ennio de Giorgi:
  Selected Papers}, pp.~699--713, 1993.

\bibitem{jordan1998variational}
R.~Jordan, D.~Kinderlehrer, and F.~Otto, ``The variational formulation of the
  {Fokker--Planck} equation,'' {\em SIAM journal on mathematical analysis},
  vol.~29, no.~1, pp.~1--17, 1998.

\bibitem{li2022computational}
W.~Li, W.~Lee, and S.~Osher, ``Computational mean-field information dynamics
  associated with reaction-diffusion equations,'' {\em Journal of Computational
  Physics}, p.~111409, 2022.

\bibitem{kingma2014adam}
D.~P. Kingma and J.~Ba, ``Adam: A method for stochastic optimization,'' {\em
  arXiv preprint arXiv:1412.6980}, 2014.

\bibitem{he2016deep}
K.~He, X.~Zhang, S.~Ren, and J.~Sun, ``Deep residual learning for image
  recognition,'' in {\em Proceedings of the IEEE conference on computer vision
  and pattern recognition}, pp.~770--778, 2016.

\bibitem{ruthotto2020machine}
L.~Ruthotto, S.~J. Osher, W.~Li, L.~Nurbekyan, and S.~W. Fung, ``A machine
  learning framework for solving high-dimensional mean field game and mean
  field control problems,'' {\em Proceedings of the National Academy of
  Sciences}, vol.~117, no.~17, pp.~9183--9193, 2020.

\end{thebibliography}

\end{document}